\newtheorem{theorem}{Theorem}
\newtheorem{problem}{Problem}
\newtheorem{definition}{Definition}
\newtheorem{corollary}{Corollary}
\newtheorem{lemma}{Lemma}
\newcommand{\task}{\Pi}
\newcommand{\traj}{\xi}
\newcommand{\state}{x}
\newcommand{\statespace}{\mathcal{X}}
\newcommand{\safeset}{\mathcal{S}}
\newcommand{\unsafeset}{\mathcal{A}}
\newcommand{\umax}{\Delta\state}
\newcommand{\sd}{\textsf{sd}}
\newcommand{\numsafe}{N_s}
\newcommand{\numunsafe}{N_{\neg s}}
\newcommand{\control}{u}
\newcommand{\controlset}{\mathcal{U}}
\newcommand{\trajset}{\mathcal{T}}
\newcommand{\constraintset}{\mathcal{C}}
\newcommand{\unsafetrajset}{\mathcal{T}_\mathcal{A}}
\newcommand{\trajxu}{\traj_{xu}}
\newcommand{\trajx}{\traj_\state}
\newcommand{\traju}{\traj_\control}
\newcommand{\constraintspace}{\mathcal{C}}
\newcommand{\guarunsafe}{\mathcal{G}_{\neg s}}
\newcommand{\guarsafe}{\mathcal{G}_s}
\newcommand{\feas}{\mathcal{F}}
\newcommand{\cstate}{k}
\newcommand{\numbox}{N^*}
\theoremstyle{remark}
\newtheorem*{rem}{Remark}
\title{Learning Parametric Constraints in High Dimensions from Demonstrations}
\author{
  Glen Chou, Necmiye Ozay, and Dmitry Berenson\\
  Department of Electrical Engineering and Computer Science\\
  University of Michigan, Ann Arbor\\
  \texttt{\{gchou, necmiye, dmitryb\}@umich.edu} \\
}
\begin{document}
\maketitle

\vspace{-25pt}
\begin{abstract}
We present a scalable algorithm for learning parametric constraints in high dimensions from safe expert demonstrations. To reduce the ill-posedness of the constraint recovery problem, our method uses hit-and-run sampling to generate lower cost, and thus unsafe, trajectories. Both safe and unsafe trajectories are used to obtain a representation of the unsafe set that is compatible with the data by solving an integer program in that representation's parameter space. Our method can either leverage a known parameterization or incrementally grow a parameterization while remaining consistent with the data, and we provide theoretical guarantees on the conservativeness of the recovered unsafe set. We evaluate our method on high-dimensional constraints for high-dimensional systems by learning constraints for 7-DOF arm, quadrotor, and planar pushing examples, and show that our method outperforms baseline approaches.
\end{abstract}

\keywords{learning from demonstration, safe learning, constraint inference} 

\vspace{-12pt}
\section{Introduction}
\vspace{-7pt}
Learning from demonstration is a powerful paradigm for enabling robots to perform complex tasks. Inverse optimal control and inverse reinforcement learning (IOC/IRL) (\cite{irl_1, irl_2, lfd3, ng_irl}) methods have been used to learn a cost function to replicate the behavior of an expert demonstrator. However, planning problems generally also require knowledge of constraints, which define the states or trajectories that are safe. For example, to get a robot arm to efficiently transport a cup of coffee without spilling it, one can optimize a cost function describing the length of the path, subject to constraints on the pose of the end effector. Constraints can represent safety requirements more strictly than cost functions, especially in safety-critical situations: enforcing a hard constraint can enable the robot to guarantee safe behavior, as opposed to using a ``softened" cost penalty term. Furthermore, learning a global constraint shared across many tasks can help the robot generalize. Consider the arm, which must avoid spilling the coffee regardless of where the cup started off or needs to go.

While constraints are important, it can be impractical to exhaustively program all the possible constraints a robot should obey across all tasks. Thus, we consider the problem of extracting the latent constraints within expert demonstrations that are shared across tasks. We adopt the insight of \cite{extended_version} that each safe, optimal demonstration induces a set of lower-cost trajectories that must be unsafe due to violation of an unknown constraint. As in \cite{extended_version}, we sample these unsafe trajectories, ensuring that they are also consistent with the system dynamics, control constraints, and start/goal constraints. The unsafe trajectories are used together with the safe demonstrations in an ``inverse" integer program that recovers an unsafe set consistent with the safe and unsafe trajectories. We make the following additional contributions in this paper. First, by using a (potentially known) parameterization of the constraints, our method enables the inference of safe and unsafe sets in high-dimensional constraint spaces. Second, we relax the known parametrization assumption and propose a means to incrementally grow a parameterization with the data. Third, we introduce a method for extracting volumes of states which are guaranteed safe or guaranteed unsafe according to the data and parameterization. Fourth, we provide theoretical analysis showing that our method is guaranteed to output conservative estimates of the unsafe and safe sets under mild assumptions. Finally, we evaluate our method on high-dimensional constraints for high-dimensional systems by learning constraints for 7-DOF arm, quadrotor, and planar pushing examples, showing that our method outperforms baseline approaches.

\vspace{-12pt}
\section{Related Work}
\vspace{-8pt}
Inverse optimal control \cite{kalman, boyd} (IOC) and inverse reinforcement learning (IRL) \cite{ng_irl} aim to recover an objective function that replicates provided expert demonstrations when optimized. Our method is complementary to these approaches; if the demonstrator solves a constrained optimization problem, we are finding its constraints, given the cost; IOC/IRL finds the cost, given the constraints \cite{toussaint}. Risk-sensitive IRL \citep{sumeet} is complementary to our work, which learns hard constraints. Similarly, \cite{satinder} learns a state-space constraint shared across tasks as a penalty term in the reward function of an MDP. However, when representing a constraint as a penalty, it is unclear if a demonstrated action was made to avoid a penalty or to improve the trajectory cost in terms of the true cost function (or both). Thus, learning a penalty generalizing across cost functions becomes difficult. To avoid this, we assume a known cost function to explicitly reason about the constraint. Also relevant is safe reinforcement learning, which aims to perform exploration while minimizing visitation of unsafe states. Several methods \cite{safe_exploration, krause, claire} use Gaussian process models to incrementally explore safe regions in the state space. We take a complementary approach to safe learning by using demonstrations in place of exploration to guide the learning of safe behaviors. Methods exist for learning geometric state space constraints \citep{vijayakumar, shah}, task space equality constraints \citep{howard1, howard2}, and convex constraints \citep{melanie}, which our algorithm generalizes by being able to learn arbitrary nonconvex parametric inequality constraints defined in some constraint space (not limited to the state space). Other methods aim to learn local trajectory-based constraints \cite{dmitry, anca, lfdc1,lfdc2,lfdc3,lfdc4} by reasoning over the constraints within a single trajectory or task. In contrast, our method aims to learn a global constraint shared across tasks. 

The method closest to our work is \cite{extended_version}, which learns a global shared constraint on a gridded constraint space; hence, the resulting constraint recovery method scales exponentially with the constraint space dimension and cannot exploit any side information on the structure of the constraint. This often leads to very conservative estimates of the unsafe set, and only grid cells visited by demonstrations can be learned guaranteed safe. We overcome these shortcomings with a novel algorithm that exploits constraint parameterizations for scalability and integration of prior knowledge, and also enables learning volumes of guaranteed safe/unsafe states in the original non-discretized constraint space, yielding less conservative estimates of the safe/unsafe sets under weaker assumptions than \cite{extended_version}.

\vspace{-8pt}
\section{Problem Setup}
\vspace{-8pt}
Consider a system with discrete-time dynamics $\state_{t+1} = f(\state_t, \control_t, t)$ or continuous-time dynamics $\dot\state = f(\state, \control, t)$, where $\state\in\statespace$ and $\control\in\controlset$. The system performs tasks $\task$ represented as constrained optimization problems over state/control trajectories $\trajx$/$\traju$ in state/control trajectory space $\trajset^\state$/$\trajset^\control$:
\begin{problem}[Forward problem / ``task" $\task$]\label{prob:fwd_prob}
\vspace{-3pt}
\begin{equation}\label{eq:fwdprob}
	\begin{array}{>{\displaystyle}c >{\displaystyle}l >{\displaystyle}l}
		\underset{\trajx, \traju}{\textnormal{min}} & \quad c_\task(\trajx, \traju) &\\
		\textnormal{s.t.} & \quad \phi(\trajx, \traju) \in \safeset(\theta) \subseteq \constraintspace\\
		& \quad \bar\phi(\trajx, \traju) \in \bar\safeset \subseteq \bar\constraintspace\\
		& \quad \phi_\task(\trajx, \traju) \in \safeset_\task \subseteq \constraintspace_\task\\
	\end{array}\hspace{-15pt}
\end{equation}
\end{problem}
\vspace{-8pt}
\noindent where $c_\task(\cdot): \trajset^\state \times \trajset^\control \rightarrow \mathbb{R}$ is a cost function for task $\task$, and $\phi(\cdot, \cdot) : \trajset^\state \times \trajset^\control \rightarrow \constraintspace$ is a known mapping from state-control trajectories to a constraint space $\constraintspace$, elements of which are referred to as ``constraint states". Mappings $\bar\phi(\cdot,\cdot): \trajset^\state \times \trajset^\control \rightarrow \bar\constraintspace$ and $\phi_\task(\cdot, \cdot): \trajset^\state \times \trajset^\control \rightarrow \constraintspace_\task$ are known and map to constraint spaces $\bar\constraintspace$ and  $\constraintspace_\task$, containing a known shared safe set $\bar \safeset$ and a known task-dependent safe set $\safeset_\task$, respectively. In this paper, we take $\trajset_{\safeset_\task}$ to be the set of trajectories satisfying start/goal state constraints and $\trajset_{\bar\safeset}$ to be the set of dynamically-feasible trajectories obeying control constraints, though the dynamics may not be known in closed form. $\safeset(\theta) = \{\cstate \in \constraintspace \mid g(\cstate, \theta) > 0 \}$ is an unknown safe set defined by an unknown parameter $\theta \in \Theta$ and a possibly unknown parameterization $g(\cdot, \cdot)$. A demonstration, $\trajxu \doteq (\trajx, \traju) \in \trajset^{\state\control}$, is a state-control trajectory which approximately solves Problem \ref{eq:fwdprob}, i.e. it satisfies all constraints and its cost is at most a factor of $\delta$ above the cost of a globally optimal solution $\trajxu^*$, i.e. $c(\trajx, \traju) \le (1+\delta)c(\trajx^*, \traju^*)$. For convenience, we summarize our frequently used notation in Appendix \ref{sec:app_notation}. In this paper, our goal is to recover the safe set $\safeset(\theta)$ and its complement, the unsafe set $\unsafeset(\theta) \doteq \safeset(\theta)^c$, given $\numsafe$ demonstrations $\{\traj_{s_j}^* \}_{j=1}^{\numsafe}$, $\numunsafe$ inferred unsafe trajectories $\{\traj_{\neg s_k} \}_{k=1}^{\numunsafe}$, the cost function $c_\task(\cdot)$, task-dependent constraints $\safeset_\task$, and a simulator generating dynamically-feasible trajectories satisfying control constraints.
\vspace{-8pt}
\section{Method}
\vspace{-8pt}

In this section, we describe our method (a full algorithm block is presented in Appendix \ref{sec:app_algorithm}). In Section \ref{sec:method_sampling}, we describe how to sample unsafe trajectories. In Sections \ref{sec:method_recover} and \ref{sec:method_ip}, we present mixed integer programs which recover a consistent constraint for a fixed parameterization and extract volumes of guaranteed safe/unsafe states. In Section \ref{sec:incremental}, we present how our method can be extended to the case of unknown parameterizations.
\subsection{Sampling lower-cost trajectories}\label{sec:method_sampling}

\vspace{-8pt}
\begin{wrapfigure}{r}{0.45\linewidth}
\vspace{-24pt}
\includegraphics[width=\linewidth]{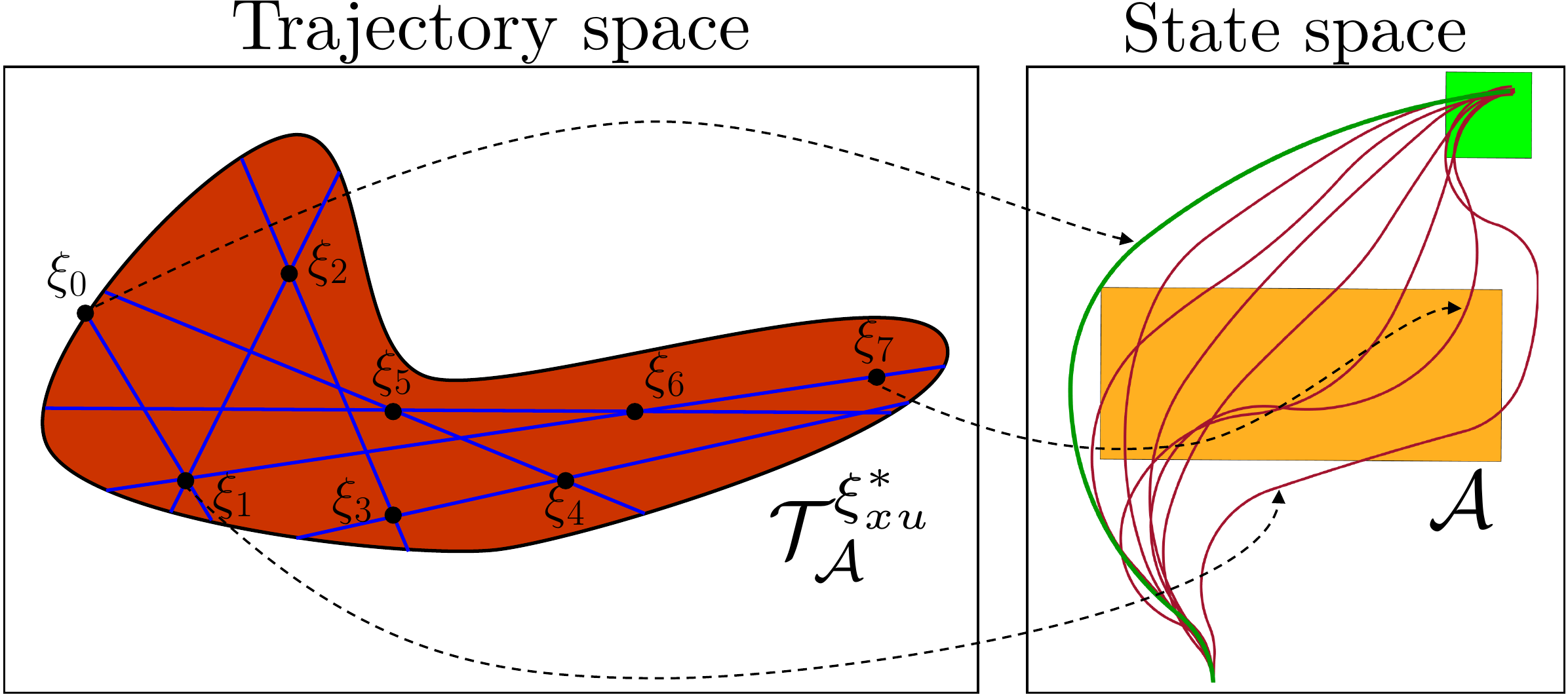}
\centering
\caption{\small Hit-and-run. \textbf{Left}: Blue lines: sampled random directions; black dots: samples. \textbf{Right}: Each point in $\mathcal{T}_{\unsafeset}^{\trajxu^*}$ corresponds to an unsafe trajectory in the constraint space $\constraintspace$ (here, $\constraintspace = \statespace$). } \label{fig:hnr}
\vspace{-25pt}
\end{wrapfigure}
In this section, we describe the general sampling framework presented in \cite{extended_version} while also relaxing the assumption of known closed-form dynamics made in \cite{extended_version}. We define the set of unsafe state-control trajectories induced by an optimal, safe demonstration $\traj_{xu}^*$, $\unsafetrajset^{\trajxu^*}$, as the set of state-control trajectories of lower cost that obey the known constraints, $\unsafetrajset^{\trajxu^*} \doteq \{ \trajxu \in \trajset_{\bar\safeset} \cap \trajset_{\safeset_\task} \mid c(\trajx, \traju) < c(\trajx^*, \traju^*)\}$. We sample from $\trajset_{\unsafeset}^{\trajxu^*}$ to obtain lower-cost trajectories obeying the known constraints using hit-and-run sampling \cite{hit_and_run}, a method guaranteeing convergence to a uniform distribution of samples over $\trajset_{\unsafeset}^{\trajxu^*}$ in the limit; an illustration is shown in Fig. \ref{fig:hnr}. Hit-and-run starts from an initial point within the set, chooses a direction uniformly at random, moves a random amount in that direction such that the new point remains within the set, and repeats \cite{extended_version}. We sample from $\trajset_{\unsafeset}^{\trajxu^*}$ indirectly by sampling control sequences and rolling them out through the dynamics to generate dynamically-feasible trajectories. We emphasize that $f(\state, \control, t)$ does not need to be known in closed form. Given a control sequence sampled by hit-and-run, a simulator can instead be used to output the resulting dynamically-feasible trajectory, which can then be checked for membership in $\trajset_{\unsafeset}^{\trajxu^*}$ exactly as if the dynamics were known in closed form. Also, $\delta$-suboptimality of the demonstration $\trajxu^\textrm{dem}$ can be handled in this framework by sampling instead from $\{ \trajxu \in \trajset_{\bar\safeset} \cap \trajset_{\safeset_\task} \mid c(\trajx, \traju) < c(\trajx^\textrm{dem}, \traju^\textrm{dem})/(1+\delta)\}$. Optimal substructure in the cost function can be exploited to sample unsafe sub-trajectories over shorter time windows on the demonstrations; shorter unsafe trajectories provide less ambiguous information regarding $\unsafeset$ and can better reduce the ill-posedness of the constraint recovery problem \cite{extended_version}.
\vspace{-12pt}
\subsection{Recovering the constraint}\label{sec:method_recover}
\vspace{-8pt}
Recall that the unsafe set can be described by some parameterization $\unsafeset(\theta) \doteq \{ \cstate \in \constraintset\ |\ g(\cstate, \theta) \le 0 \}$, where we assume for now that $g(\cdot, \cdot)$ is known, and $\theta$ are parameters to be learned. Intuitively, $g(\cstate, \theta)$ tells us if constraint state $\cstate$ (which is any element of constraint space $\constraintspace$) is safe according to parameter $\theta$. Then, a feasibility problem can be written to find a $\theta$ consistent with the data:

\begin{problem}[Parametric constraint recovery problem]\label{prob:parametric_feasibility_program}
\begin{subequations}\label{eq:parametric_feasibility_program}
\vspace{-3pt}
\begin{align}
		\textnormal{find}\quad  & \theta\notag \\
		\textnormal{s.t.}\quad & g(\cstate_i, \theta) > 0, \quad \forall \cstate_i \in \phi(\traj_{s_j}^*),\quad  \forall j = 1, \ldots, \numsafe\label{subeq:safe} \\
		 & \exists \cstate_i \in \phi(\traj_{{\neg s}_k}), \quad g(\cstate_i, \theta) \le 0, \quad  \forall k = 1, \ldots, \numunsafe\label{subeq:unsafe}
	\end{align}
\end{subequations}
\end{problem}
\vspace{-5pt}
Constraint \eqref{subeq:safe} enforces that each safe constraint state lies outside $\unsafeset(\theta)$ and constraint \eqref{subeq:unsafe} enforces that at least one constraint state on each unsafe trajectory lies inside $\unsafeset(\theta)$. Denote $\feas$ as the feasible set of Problem \ref{prob:parametric_feasibility_program}. Further denote $\guarunsafe$ and $\guarsafe$ as the set of constraint states which are learned guaranteed unsafe and safe, respectively; that is, a constraint state $\cstate \in \guarunsafe$ or $\cstate \in \guarsafe$ if $\cstate$ is classified unsafe or safe for all $\theta \in \feas$:
\begin{minipage}{.5\linewidth}
	\begin{equation}\label{eq:guarunsafe}
		\guarunsafe \doteq \bigcap_{\theta \in \feas} \{ \cstate\ |\ g(\cstate, \theta) \le 0 \}
	\end{equation}
\end{minipage}\begin{minipage}{.5\linewidth}
	\begin{equation}\label{eq:guarsafe}
		\guarsafe \doteq \bigcap_{\theta \in \feas} \{ \cstate\ |\ g(\cstate, \theta) > 0 \}
	\end{equation}
\end{minipage}

\vspace{-5pt}
In Problem \ref{prob:parametric_feasibility_program}, it is possible to learn that a constraint state is guaranteed safe/unsafe even if it does not lie directly on a demonstration/unsafe trajectory. This is due to the parameterization: for the given set of safe and unsafe trajectories, there may be no feasible $\theta \in \feas$ where $\cstate$ is classified unsafe/safe. It is precisely this extrapolation which will enable us to learn constraints in high-dimensional spaces. We now identify classes of parameterizations for which Problem \ref{prob:parametric_feasibility_program} can be efficiently solved:

\begin{wrapfigure}{r}{0.62\linewidth}
\vspace{-17pt}
\begin{problem}[Polytopic constraint recovery problem]\label{prob:parametric_polytope_program}
\begin{subequations}\label{eq:parametric_polytope_program}
	\small\begin{align}
		\hspace{-5pt}\textnormal{find}\quad  & \theta, \{b_s^i\}_{i=1}^{N_s}, \{b_{\neg s}^i\}_{i=1}^{N_{\neg s}}\notag \\
		\textnormal{s.t.}\quad & H(\theta)\cstate_i > h(\theta) - M(1 - b_s^i),\quad b_{s_j}^i \in \{0, 1\}^{N_h},\notag\\[-5pt]
		& \ \ \sum_{i=1}^{N_h} b_{s_j}^i \ge 1, \forall \cstate_i \in \phi(\traj_{s_j}^*), i = 1, ..., T_j, j = 1, ..., \numsafe\hspace{-5pt}\label{subeq:cstr1} \\[-5pt]
		 & H(\theta) \cstate_i \le h(\theta) + M(1 - b_{\neg s_k}^i)\mathbf{1}_{N_h}, \quad b_{\neg s_k}^i \in \{0, 1\}, \notag\\[-2pt]
		 & \ \ \sum_{i=1}^{T_k} b_{\neg s_k}^i \ge 1, \quad \forall \cstate_i \in \phi(\traj_{\neg s_k}),\quad \forall k = 1, ..., \numunsafe\hspace{-5pt}\label{subeq:cstr2}
	\end{align}
\end{subequations}
\end{problem}
\vspace{-30pt}
\end{wrapfigure}
\vspace{-5pt}
\textbf{Linear case}: $g(\cstate, \theta)$ is defined by a Boolean conjunction of linear inequalities, i.e. $\unsafeset(\theta)$ can be defined as the union and intersection of half-spaces. For this case, mixed-integer programming can be employed. If $g(\cstate, \theta) \le 0$ is a single polytope, i.e. $g(\cstate, \theta) \le 0 \Leftrightarrow H(\theta)k \le h(\theta)$, where $H(\theta)$ and $h(\theta)$ are affine in $\theta$, we can solve Problem \ref{prob:parametric_polytope_program}, a mixed integer feasibility problem, to find a feasible $\theta$. In Problem \ref{prob:parametric_polytope_program}, $M$ is a large positive number and $\mathbf{1}_{N_h}$ is a vector of ones of length $N_h$, where $N_h$ is the number of rows in $H(\theta)$. Constraints (\ref{subeq:cstr1}) and (\ref{subeq:cstr2}) use the big-M formulation \cite{bigM} to enforce that each safe constraint state lies outside $\unsafeset(\theta)$ and that at least one constraint state on each unsafe trajectory lies inside $\unsafeset(\theta)$. Similar problems can be solved when the safe/unsafe set can be described by unions of polytopes. As an alternative to integer programming, satisfiability modulo theories (SMT) solvers \cite{smt} can also be used to solve Problem \ref{prob:parametric_feasibility_program} if $g(\cstate, \theta)$ is defined by a Boolean conjunction of linear inequalities.
\vspace{-3pt}

\textbf{Convex case}: $g(\cstate, \theta)$ is defined by a Boolean conjunction of convex inequalities, i.e. $\unsafeset(\theta)$ can be described as the union and intersection of convex sets. For this case, satisfiability modulo convex optimization (SMC) \cite{smc} can be employed to find a feasible $\theta$.
\vspace{-5pt}

We close this subsection with some remarks on implementation and extensions to Problems \ref{prob:parametric_feasibility_program} and \ref{prob:parametric_polytope_program}.

\vspace{-8pt}
\begin{itemize}[leftmargin=*]
	\item For suboptimal demonstrations / imperfect lower-cost trajectory sampling, Problem \ref{prob:parametric_polytope_program} can become infeasible. To address this, slack variables can be introduced: replace constraint $\sum_{i=1}^{T_k} b_{\neg s_k}^i \ge 1$ with $\sum_{i=1}^{T_k} b_{\neg s_k}^i \ge v_k, v_k \in \{0, 1\}$ and change the feasibility problem to minimization of $\sum_{k=1}^{\numunsafe} (1 - v_k)$; this finds a $\theta$ that is consistent with as many unsafe trajectories as possible.
	\item In addition to recovering sets of guaranteed learned unsafe and safe constraint states, a probability distribution over possibly unsafe constraint states can be estimated by sampling unsafe sets $\unsafeset(\theta)$ from the feasible set of Problem \ref{prob:parametric_feasibility_program} using hit-and-run sampling, starting from a feasible $\theta$.
\end{itemize}
\vspace{-10pt}
\subsection{Extracting guaranteed safe and unsafe states}\label{sec:method_ip}
\vspace{-4pt}
One can check if a constraint state $\cstate \in \guarsafe$ or $\cstate \in \guarunsafe$ by adding a constraint $g(\cstate, \theta) \le 0$ or $g(\cstate, \theta) > 0$ to Problem \ref{prob:parametric_feasibility_program} and checking feasibility of the resulting program; if the program is infeasible, $\cstate \in \guarsafe$ or $\cstate \in \guarunsafe$. In other words, solving this modified integer program can be seen as querying an oracle about the safety of a constraint state $\cstate$. The oracle can then return that $\cstate$ is guaranteed safe (program infeasible after forcing $\cstate$ to be unsafe), guaranteed unsafe (program infeasible after forcing $\cstate$ to be safe), or unsure (program remains feasible despite forcing $\cstate$ to be safe or unsafe).

Unlike the gridded formulation in \cite{extended_version}, Problem \ref{prob:parametric_feasibility_program} works in the continuous constraint space. Thus, it is not possible to exhaustively check if each $\cstate \in\guarunsafe$ or $\cstate \in \guarsafe$. To address this, the neighborhood of some constraint state $\cstate_\textrm{query}$ can be checked for membership in $\guarunsafe$ by solving the following problem:

\begin{problem}[Volume extraction]\label{prob:parametric_volume_program}
\vspace{-2pt}
\begin{equation*}\label{eq:parametric_volume_program}
	\hspace{-0pt}\begin{array}{>{\displaystyle}c >{\displaystyle}l >{\displaystyle}l}
		\underset{\theta, \varepsilon}{\textnormal{min}} & \varepsilon \\
		\textnormal{s.t.} & g(\cstate_i, \theta) > 0, \quad \forall \cstate_i \in \phi(\traj_{s_j}^*),\quad  \forall j = 1, \ldots, \numsafe \\
		 & \exists \cstate_i \in \phi(\traj_{{\neg s}_k}), \quad g(\cstate_i, \theta) \le 0, \quad  \forall k = 1, \ldots, \numunsafe \\
		 & \exists \cstate_\textrm{near} \in \{\cstate_\textrm{near} \mid \Vert \cstate_\textrm{near} - \cstate_\textrm{query} \Vert_\infty \le \varepsilon \}, \quad g(\cstate_\textrm{near}, \theta) > 0
	\end{array}
\end{equation*}
\end{problem}
\vspace{-10pt}

In words, Problem \ref{prob:parametric_volume_program} finds the smallest $\varepsilon$-hypercube centered at $\cstate_\textrm{query}$ containing a $\cstate \notin \guarunsafe$; thus, any hypercube of size $\hat\varepsilon < \varepsilon$ is contained within $\guarunsafe$: $\{\cstate \mid \Vert \cstate - \cstate_\textrm{query} \Vert_\infty \le \hat\epsilon \} \subseteq \guarunsafe$. We can write a similar problem to check the neighborhood of $\cstate_\textrm{query}$ for membership in $\guarsafe$. For some common parameterizations (axis-aligned hyper-rectangles, convex sets), there are even more efficient methods for recovering subsets of $\guarsafe$ and $\guarunsafe$, which are described in Appendix \ref{sec:app_extraction}. Volumes of safe/unsafe space can thus be produced by repeatedly solving Problem \ref{prob:parametric_volume_program} for different $\cstate_\textrm{query}$, and these volumes can be passed to a planner to generate new trajectories that are guaranteed safe.
\vspace{-5pt}

\vspace{-1pt}
\subsection{Unknown parameterizations}\label{sec:incremental}
\vspace{-1pt}
For many realistic applications, we do not have access to a known parameterization which can represent the unsafe set. Despite this, complex unsafe/safe sets can often be approximated as the union of many simple unsafe/safe sets. Along this line of thought, we present a method for incrementally growing a parameterization based on the complexity of the demonstrations and unsafe trajectories.

Suppose that the true parameterization $g(\cstate, \theta)$ of the unsafe set $\unsafeset(\theta) = \{\cstate \mid g(\cstate, \theta) \le 0\}$ is unknown but can be exactly or approximately expressed as the union of $\numbox$ simple sets $\unsafeset(\theta) \approxeq \bigcup_{i=1}^{\numbox}\{\cstate\mid g_s(\cstate,\theta_i) \le 0\}\doteq \bigcup_{i=1}^{\numbox} \unsafeset(\theta_i)$, where each simple set $\unsafeset(\theta_i)$ has a known parameterization $g_s(\cdot, \cdot)$ and $\numbox$, the minimum number of simple sets needed to reconstruct $\unsafeset$, is unknown. 

A lower bound on $\numbox$, $\underline N$, can be estimated by incrementally adding simple sets until Problem \ref{prob:parametric_feasibility_program} becomes feasible. However, for $\underline N < \numbox$, the extracted $\guarsafe$ and $\guarunsafe$ are not guaranteed to be conservative estimates of $\safeset$ and $\unsafeset$ (Theorem \ref{thm:lowerboundconservative}), and $\guarsafe$ and $\guarunsafe$ are only guaranteed to be conservative if $\hat N \ge \numbox$, where $\hat N$ is the chosen number of simple sets (see Theorem \ref{thm:upperboundconservative}). Unfortunately, inferring a guaranteed overestimation of $\numbox$ only from data is not possible, as there can always be subsets of the constraint which are not activated by the given demonstrations. Two facts mitigate this:
\vspace{-5pt}

\begin{itemize}[leftmargin=*]
	\item If an upper bound on the number of simple sets needed to describe $\unsafeset(\theta)$, $\bar N_\textrm{loose} \ge N^*$, is known (where this bound can be trivially loose), $\guarsafe \subseteq \safeset$ and $\guarunsafe \subseteq \unsafeset$ by using $\bar N_\textrm{loose}$ simple sets in solving Problem \ref{prob:parametric_feasibility_program}. Hence, by using $\bar N_\textrm{loose}$, $\guarsafe$ and $\guarunsafe$ can be made guaranteed conservative (see Theorem \ref{thm:upperboundconservative}), at the cost of the resulting $\guarsafe$ and $\guarunsafe$ being potentially small.
	\item As the demonstrations begin to cover the space, $\underline N \rightarrow \numbox$. Hence, by using $\underline N$ simple sets, $\guarsafe$ and $\guarunsafe$ are asymptotically conservative.
\end{itemize}
\vspace{-5pt}
In our experiments, we choose our simple sets as axis-aligned hyper-rectangles in $\constraintspace$, which is motivated by: 1) any open set in $\constraintspace$ can be approximated as a countable/finite union of open axis-aligned hyper-rectangles \cite{tao}; 2) unions of hyper-rectangles are easily representable in Problem \ref{prob:parametric_polytope_program}.

\vspace{-5pt}
\section{Theoretical Analysis}\label{sec:theory}
\vspace{-3pt}
In this section, we present theoretical analysis on our parametric constraint learning algorithm. In particular, we analyze the conditions under which our algorithm is guaranteed to learn a conservative estimate of the safe and unsafe sets. For space, the proofs and additional results on conservativeness (Section \ref{sec:app_conservativeness}) and the learnability of a constraint (Section \ref{sec:app_learnability}) are presented in the appendix. We develop the theory for $\constraintspace = \statespace$ for legibility, but the results can be easily extended to general $\constraintspace$.

\begin{theorem}[Conservativeness: Known parameterization]\label{thm:knownconservative}
	Suppose the parameterization $g(\state, \theta)$ is known exactly. Then, for a discrete-time system, extracting $\guarunsafe$ and $\guarsafe$ (as defined in \eqref{eq:guarunsafe} and \eqref{eq:guarsafe}, respectively) from the feasible set of Problem \ref{prob:parametric_feasibility_program} returns $\guarunsafe \subseteq \unsafeset$ and $\guarsafe \subseteq \safeset$. Further, if the known parameterization is $H(\theta) \state_i \le h(\theta)$ and $M$ in Problem \ref{prob:parametric_polytope_program} is chosen to be greater than $$\max\Big(\max_{x_i \in \traj_s} \max_\theta \max_j(H(\theta) x_i - h(\theta))_j, \max_{x_i \in \traj_{\neg s}} \max_\theta \max_j (H(\theta) x_i - h(\theta))_j\Big),$$ then extracting $\guarunsafe$ and $\guarsafe$ from the feasible set of Problem \ref{prob:parametric_polytope_program} recovers $\guarunsafe \subseteq \unsafeset$ and $\guarsafe \subseteq \safeset$.
\end{theorem}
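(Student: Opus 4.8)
The plan is to show that the ground-truth parameter $\theta^*$, for which $\unsafeset = \{\cstate \mid g(\cstate,\theta^*) \le 0\}$ and $\safeset = \{\cstate \mid g(\cstate,\theta^*) > 0\}$, lies in the feasible set $\feas$ of Problem \ref{prob:parametric_feasibility_program}, and then to exploit the fact that $\guarunsafe$ and $\guarsafe$ are \emph{intersections} over all $\theta \in \feas$. Once $\theta^* \in \feas$ is established, monotonicity of intersection gives the result immediately:
\begin{equation*}
\guarunsafe = \bigcap_{\theta \in \feas}\{\cstate \mid g(\cstate,\theta) \le 0\} \subseteq \{\cstate \mid g(\cstate,\theta^*)\le 0\} = \unsafeset,
\end{equation*}
and identically $\guarsafe \subseteq \safeset$, since an intersection over a collection that contains $\theta^*$ is contained in the single term indexed by $\theta^*$. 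This step is trivial; the real content of the proof is the membership $\theta^* \in \feas$.

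To establish $\theta^* \in \feas$, I would verify the two constraint families of Problem \ref{prob:parametric_feasibility_program} separately. For \eqref{subeq:safe}: each demonstration $\traj_{s_j}^*$ is safe, so by definition every constraint state $\cstate_i \in \phi(\traj_{s_j}^*)$ satisfies the true constraint, i.e. $g(\cstate_i,\theta^*) > 0$, which is exactly \eqref{subeq:safe} at $\theta^*$. For \eqref{subeq:unsafe}: I would invoke the construction of the inferred unsafe trajectories from Section \ref{sec:method_sampling}. Each $\traj_{\neg s_k}$ is drawn from $\unsafetrajset^{\trajxu^*}$, so it obeys all \emph{known} constraints and has strictly lower cost than the corresponding optimal demonstration. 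Were $\traj_{\neg s_k}$ to also satisfy the \emph{unknown} constraint, it would be feasible for the full forward problem (Problem \ref{prob:fwd_prob}) at strictly lower cost, contradicting global optimality of the demonstration. Here the discrete-time assumption enters: the trajectory is a finite sequence of states, so violating the constraint is equivalent to at least one of the finitely many constraint states $\cstate_i \in \phi(\traj_{\neg s_k})$ lying in $\unsafeset$, i.e. $\exists\,\cstate_i$ with $g(\cstate_i,\theta^*) \le 0$ — precisely \eqref{subeq:unsafe} at $\theta^*$. Hence $\theta^* \in \feas$, proving the first claim.

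For the polytopic case the strategy is to show that the $\theta$-projection of the feasible set of the mixed-integer Problem \ref{prob:parametric_polytope_program} coincides with $\feas$, so the result just proved transfers verbatim. The crux is the big-M encoding: for a fixed binary assignment, an indicator set to $1$ activates the intended half-space condition (a violated row $(H(\theta)\cstate_i)_j > h(\theta)_j$ in \eqref{subeq:cstr1}, or full membership $H(\theta)\cstate_i \le h(\theta)$ in \eqref{subeq:cstr2}), while an indicator set to $0$ must render that row non-binding. The role of the stated lower bound on $M$ is exactly to guarantee the latter: choosing $M$ larger than the maximum attainable residual $\max_j (H(\theta)\cstate_i - h(\theta))_j$ over all relevant constraint states and all $\theta$ makes the relaxed rows slack and hence vacuous. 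The cardinality constraints ($\sum \ge 1$) then encode the logical requirements ``at least one row is violated'' (the safe state lies outside the polytope) and ``at least one state lies inside the polytope'' (the unsafe trajectory enters $\unsafeset$), which are the half-space specializations of \eqref{subeq:safe} and \eqref{subeq:unsafe}. This equivalence reduces Problem \ref{prob:parametric_polytope_program} to Problem \ref{prob:parametric_feasibility_program}, and the conclusion follows from the first part.

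I expect the main obstacle to be the second paragraph — pinning down $\theta^* \in \feas$ via constraint \eqref{subeq:unsafe}. The delicate points are (i) making the ``lower cost $+$ known-feasible $\Rightarrow$ unknown-infeasible'' contradiction rigorous, which hinges on the demonstration being genuinely optimal for the full constrained problem, and (ii) the discrete-time bookkeeping that lets a finite set of constraint states certify constraint violation. By contrast, the big-M verification in the polytopic case is comparatively routine once the bound on $M$ is in hand, reducing to checking that the relaxed linear inequalities are slack.
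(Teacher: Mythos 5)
Your proposal is correct and follows essentially the same route as the paper's proof: both arguments reduce to showing that the true parameter lies in $\feas$ (the paper phrases this as a contradiction, you argue it directly, and you additionally spell out the lower-cost/optimality reasoning behind constraint \eqref{subeq:unsafe} that the paper merely asserts) and then use that the intersections defining $\guarunsafe$ and $\guarsafe$ are contained in the corresponding sets for that single parameter. The big-$M$ portion likewise matches the paper's argument that a sufficiently large $M$ makes the relaxed rows vacuous so that Problem \ref{prob:parametric_polytope_program} coincides with Problem \ref{prob:parametric_feasibility_program}.
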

\vspace{-4pt}
We also present conservativeness results for continuous-time dynamics in Corollary \ref{thm:app_c2d}.

Now, let's consider the case where the true parameterization is not known and we use the incremental method described in Section \ref{sec:incremental}, where $g_s(\state, \theta)$ is the simple parameterization. We consider the over-parameterized case (Theorem \ref{thm:upperboundconservative}) and the under-parameterized case (Theorem \ref{thm:lowerboundconservative}). We analyze the case where the true, under-, and over-parameterization are defined respectively as:

\vspace{-11pt}
\begin{minipage}{.4\linewidth}
	\small\begin{equation}\label{eq:trueparam}
	g(\state, \theta) \le 0 \Leftrightarrow \\ \bigvee_{i=1}^{\numbox} \big(g_s(\state, \theta_i) \le 0\big)
\end{equation}
\end{minipage}
\begin{minipage}{.6\linewidth}
	\small\begin{equation}\label{eq:underparam}
	g(\state, \theta) \le 0 \Leftrightarrow \bigvee_{i=1}^{\underline N} \big(g_s(\state, \theta_i) \le 0\big), \quad \underline N < \numbox
\end{equation}
\end{minipage}
\vspace{-5pt}
\centerline{\begin{minipage}{.5\linewidth}
\vspace{-18pt}
	\small\begin{equation}\label{eq:overparam}
	g(\state, \theta) \le 0 \Leftrightarrow \bigvee_{i=1}^{\bar N} \big(g_s(\state, \theta_i) \le 0\big), \quad \bar N > \numbox.
\end{equation}
\end{minipage}}

\begin{theorem}[Conservativeness: Over-parameterization]\label{thm:upperboundconservative}
	Suppose the true parameterization and over-parameterization are defined as in \eqref{eq:trueparam} and \eqref{eq:overparam}. Then, $\guarunsafe\subseteq \unsafeset$ and $\guarsafe\subseteq \safeset$.
\end{theorem}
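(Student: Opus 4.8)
The plan is to reduce this to the argument underlying Theorem \ref{thm:knownconservative}. The crucial observation is that the over-parameterized family with $\bar N > \numbox$ simple sets can still represent the true unsafe set \emph{exactly}, so the true constraint remains feasible, and the intersections defining $\guarunsafe$ and $\guarsafe$ stay on the correct side of the ground truth.

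First I would exhibit an explicit over-parameterized configuration that realizes $\unsafeset$. Let $\theta^\star=(\theta_1,\ldots,\theta_{\numbox})$ be the true parameters in \eqref{eq:trueparam}. I would pad this to a configuration $\bar\theta^\star=(\theta_1,\ldots,\theta_{\numbox},\theta_1,\ldots,\theta_1)\in\thetaspace^{\bar N}$ by duplicating one true simple set $\bar N-\numbox$ times. Since duplicating a set in a union leaves the union unchanged, $\bigcup_{i=1}^{\bar N}\{\state\mid g_s(\state,\bar\theta^\star_i)\le 0\}=\bigcup_{i=1}^{\numbox}\{\state\mid g_s(\state,\theta_i)\le 0\}=\unsafeset$, i.e. $g(\state,\bar\theta^\star)\le 0\Leftrightarrow\state\in\unsafeset$.

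Next I would verify that $\bar\theta^\star$ lies in the feasible set $\feas$ of the over-parameterized Problem \ref{prob:parametric_feasibility_program}. Constraint \eqref{subeq:safe} holds because every demonstration $\traj_{s_j}^*$ is safe, so each constraint state $\cstate_i\in\phi(\traj_{s_j}^*)$ lies in $\safeset$, giving $g(\cstate_i,\bar\theta^\star)>0$. Constraint \eqref{subeq:unsafe} holds because, by the sampling scheme of Section \ref{sec:method_sampling}, each inferred unsafe trajectory $\traj_{\neg s_k}$ has lower cost than its safe demonstration while obeying all known constraints; since the demonstration is (approximately) optimal, such a trajectory must violate the unknown constraint, so at least one of its constraint states lies in $\unsafeset$, giving $\exists\,\cstate_i$ with $g(\cstate_i,\bar\theta^\star)\le 0$. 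Hence $\bar\theta^\star\in\feas$. Closing via the intersection definitions \eqref{eq:guarunsafe}--\eqref{eq:guarsafe}: for any $\cstate\in\guarunsafe$ we have $g(\cstate,\theta)\le 0$ for all $\theta\in\feas$, in particular $g(\cstate,\bar\theta^\star)\le 0$, so $\cstate\in\unsafeset$, proving $\guarunsafe\subseteq\unsafeset$; symmetrically every $\cstate\in\guarsafe$ satisfies $g(\cstate,\bar\theta^\star)>0$, so $\cstate\in\safeset$, proving $\guarsafe\subseteq\safeset$.

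The main obstacle is conceptual rather than computational: one must recognize that passing from $\numbox$ to $\bar N$ simple sets can only enlarge the family of feasible unsafe sets, yet conservativeness is preserved as long as the true constraint remains representable and feasible, which the padding construction guarantees. A larger feasible family only shrinks the intersections $\guarunsafe$ and $\guarsafe$, so over-parameterization costs tightness (smaller guaranteed volumes) but never breaks containment, in sharp contrast to the under-parameterized case of Theorem \ref{thm:lowerboundconservative}, where the true set is \emph{not} representable and $\bar\theta^\star$ need not exist in $\feas$.
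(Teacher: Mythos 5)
Your proof is correct, and it rests on the same essential observation as the paper's: the over-parameterized family can still represent $\unsafeset$ exactly, so a parameter realizing the true unsafe set lies in $\feas$, and intersecting over all of $\feas$ can therefore only produce subsets of $\unsafeset$ and $\safeset$. The mechanics differ slightly. The paper embeds the true parameterization into the over-parameterized family by constraining the extra $\bar N - \numbox$ simple sets to be \emph{empty}, then compares feasible sets ($\feas \subseteq \hat\feas$) and invokes Lemma \ref{lem:contain} together with Theorem \ref{thm:knownconservative} to get $I(\hat\feas) \subseteq I(\feas) \subseteq \unsafeset$. You instead embed by \emph{duplicating} one true simple set and argue directly from the single feasible witness $\bar\theta^\star$, effectively inlining the Theorem \ref{thm:knownconservative} argument rather than citing it. Your duplication construction is marginally more robust: it does not require that the simple parameterization $g_s$ can represent the empty set (which, e.g., a box family with a minimum-width restriction could not), whereas the paper's padding does. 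On the other hand, the paper's route makes explicit the monotonicity $I(\hat\feas)\subseteq I(\feas)$, which also explains your closing remark that over-parameterization costs only tightness; in your version that observation is a corollary rather than a step of the proof. Both arguments are complete and the feasibility verification of $\bar\theta^\star$ (safe states truly safe, each sampled lower-cost trajectory containing at least one truly unsafe state) matches the paper's reasoning in the proof of Theorem \ref{thm:knownconservative}.
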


\begin{theorem}[Conservativeness: Under-parameterization]\label{thm:lowerboundconservative}
Suppose the true parameterization and under-parameterization are defined as in \eqref{eq:trueparam} and \eqref{eq:underparam}. Furthermore, assume that we incrementally grow the parameterization as described in Section \ref{sec:incremental}. Then, the following are true:
\vspace{-5pt}
\begin{enumerate}[leftmargin=*]
	\item $\guarunsafe$ and $\guarsafe$ are not guaranteed to be contained in $\unsafeset$ (unsafe set) and $\safeset$ (safe set), respectively.\vspace{-8pt}
	\item Each recovered simple unsafe set $\unsafeset(\theta_i)$, $i=1,\ldots, \underline N$, for any $\theta_1, \ldots, \theta_{\underline N} \in \feas$, touches the true unsafe set (there are no spurious simple unsafe sets): for $i = 1, \ldots, \underline N$, for $\theta_1, \ldots, \theta_{\underline N} \in \feas$, $\unsafeset(\theta_i) \cap \unsafeset \ne \emptyset$ ($\underline N$ is as defined in Section \ref{sec:incremental}).
\end{enumerate}
\end{theorem}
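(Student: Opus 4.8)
The plan is to handle the two items by different means: item (1) is a non-guarantee, which I will establish by exhibiting a single explicit counterexample, while item (2) is a structural guarantee, which I will prove by contradiction against the minimality of $\underline{N}$ coming from the incremental procedure of Section \ref{sec:incremental}.

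For item (1), I would work in $\constraintspace = \reals$ with simple sets taken to be closed intervals, $g_s(\state,\theta_i) \le 0 \Leftrightarrow \state \in [\theta_i^-,\theta_i^+]$, so that $\numbox$ and $\underline{N}$ have a transparent geometric meaning. Take the true unsafe set $\unsafeset = [0,1]\cup[2,3]$ (so $\numbox = 2$), supply safe demonstration constraint states only at $\{-1,4\}$, and supply two unsafe trajectories whose genuinely unsafe constraint states are $0.5 \in [0,1]$ and $2.5 \in [2,3]$, with no safe constraint state in the gap $(1,2)$. A single interval such as $[0.5,2.5]$ then satisfies both constraints of Problem \ref{prob:parametric_feasibility_program}, so the incremental procedure halts at $\underline{N}=1<2=\numbox$. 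Every feasible interval must contain both $0.5$ and $2.5$, hence the gap point $1.5$, so $1.5 \in \guarunsafe$ although $1.5 \in \safeset$; this shows $\guarunsafe \not\subseteq \unsafeset$. A mirror-image construction (two true components with both unsafe witnesses inside one component and a safe constraint state separating them from the second) forces a truly unsafe state to be declared guaranteed safe, giving $\guarsafe \not\subseteq \safeset$; together these establish item (1).

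For item (2), I would argue by contradiction. Suppose some feasible tuple $(\theta_1,\ldots,\theta_{\underline{N}})$ has a spurious simple set, i.e. $\unsafeset(\theta_{i_0}) \cap \unsafeset = \emptyset$, equivalently $\unsafeset(\theta_{i_0}) \subseteq \safeset$. The goal is to show that the remaining $\underline{N}-1$ simple sets already form a feasible point of Problem \ref{prob:parametric_feasibility_program}, contradicting the minimality of $\underline{N}$ guaranteed by the incremental construction. The safe constraint \eqref{subeq:safe} is immediate: deleting $\unsafeset(\theta_{i_0})$ only shrinks the learned unsafe union, and every safe demonstration state was already outside the full union. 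The substance is the unsafe constraint \eqref{subeq:unsafe}: I must show that deleting the spurious set leaves every unsafe trajectory with a witness constraint state in the remaining union. Here I would use the defining property of the sampled unsafe trajectories from Section \ref{sec:method_sampling} — each is a lower-cost trajectory that must violate the true constraint — so each $\traj_{\neg s_k}$ contains a genuinely unsafe constraint state $\state_k^\ast \in \unsafeset$, which by $\unsafeset(\theta_{i_0}) \subseteq \safeset$ cannot lie in the spurious set.

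The hard part will be closing exactly this step, because the unsafe constraint \eqref{subeq:unsafe} is existential: a trajectory needs only one witness, and a priori the spurious set could supply that sole witness through a point of $\traj_{\neg s_k}$ that happens to lie in $\safeset$, so that deleting it would leave the trajectory uncovered. I would rule this out by showing that any trajectory whose only learned witness lies in $\unsafeset(\theta_{i_0}) \subseteq \safeset$ can instead be covered through its genuinely unsafe state $\state_k^\ast$ without increasing the count — in the idealized regime of Section \ref{sec:method_sampling}, where optimal substructure lets us shrink unsafe trajectories down to their unsafe constraint states, this is automatic, since then a simple set contained entirely in $\safeset$ covers no unsafe trajectory at all and is trivially deletable. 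Once feasibility at size $\underline{N}-1$ is obtained, it contradicts minimality of $\underline{N}$, so no spurious set can exist and every $\unsafeset(\theta_i)$ must intersect $\unsafeset$.
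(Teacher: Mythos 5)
Your approach coincides with the paper's on both items: the paper establishes item 1 with a counterexample of exactly your flavor (two separated unsafe witnesses whose single-box hull must cover safe space in between; its Fig.~\ref{fig:counterexample} is the 2D axis-aligned-box version of your interval construction, supplemented by the observation that restricting the parameterization shrinks $\feas$ and hence enlarges the implied sets), and it proves item 2 by the same contradiction against the minimality of $\underline N$ from the incremental procedure. Your item-1 construction is sound, and you go slightly further than the paper by exhibiting $\guarsafe \not\subseteq \safeset$ with an explicit second construction rather than by symmetry. On item 2, the step you flag as ``the hard part'' is precisely the step the paper does not address: its proof simply asserts that dropping the spurious set $\unsafeset(\theta_{i_0})$ leaves $\{\theta_j\}_{j\ne i_0}$ feasible, with no discussion of the existential constraint \eqref{subeq:unsafe}. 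Your worry is legitimate: a sampled unsafe trajectory generally contains safe constraint states as well as unsafe ones, and nothing in Problem~\ref{prob:parametric_feasibility_program} prevents its sole witness from being a safe state lying inside the spurious set, in which case deleting that set genuinely breaks feasibility and the contradiction does not go through. Your proposed repair --- working in the regime where unsafe trajectories are reduced to their truly unsafe constraint states --- does close the gap, but it is an additional hypothesis beyond what Section~\ref{sec:method_sampling} guarantees (hit-and-run yields lower-cost trajectories, not certificates of which of their states are unsafe). So as written, neither your argument nor the paper's establishes item 2 unconditionally; you should state the extra assumption explicitly, and you should be aware that the paper's own proof is silent on this point rather than containing a resolution you are missing.
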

\vspace{-10pt}
\section{Results}
\vspace{-3pt}
We evaluate our method, showing that our method can be applied to constraints with unknown parameterizations (Section \ref{sec:res_unknown}), high-dimensional constraints defined for high-dimensional systems (Section \ref{sec:res_highdim}), and settings where the dynamics are not known in closed form (Section \ref{sec:res_modelfree}). We also compare our performance with a neural network (NN) baseline\footnote{In all experiments, 1) the NN is trained with the safe/unsafe trajectories and predicts at test time if a queried constraint state is safe/unsafe; 2) error bars are generated by initializing the NN with 10 different random seeds and evaluating accuracy after training. The architectures/training details are presented in Appendix \ref{sec:app_experimental}.}. We further compare with the grid-based method \cite{extended_version} on the 2D examples. For space, experimental details are provided in Appendix \ref{sec:app_experimental}.
\vspace{-10pt}

\begin{figure}
	\centering
	\includegraphics[width=\linewidth]{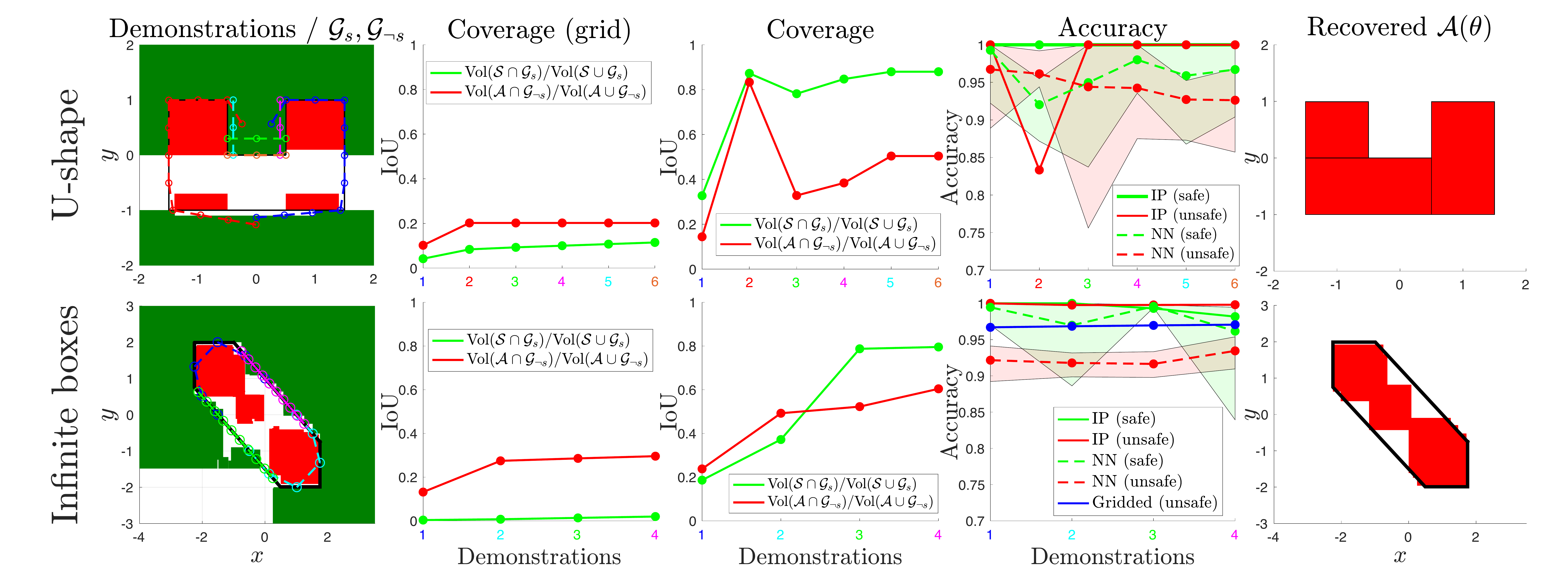}
     \vspace{-12pt}
     \caption{\small Unknown parameterization. \textbf{Col. 1}: Red: $\guarunsafe$; Green: $\guarsafe$. Demonstrations are overlaid. \textbf{Col. 2}: Coverage of $\unsafeset$ and $\safeset$ with \cite{extended_version}. In this (and all later examples), the demonstrations are color-coded with $x$-axis. \textbf{Col. 3}: Coverage of $\unsafeset$ and $\safeset$ with our method. \textbf{Col. 4}: Classification accuracy (dotted: average NN accuracy, shaded: range of NN accuracies over 10 random seeds). \textbf{Col. 5}: Recovered constraint with multi-polytope variant of Problem \ref{prob:parametric_polytope_program}.}
     \vspace{-13pt}
     \label{fig:res_unknown}
\end{figure}

\subsection{Unknown parameterization}\label{sec:res_unknown}
\vspace{-5pt}

\textbf{U-shape}: We first present a kinematic 2D example where a U-shape $\unsafeset$ is to be learned, but the number of simple unsafe sets needed to represent $\unsafeset$ (three) is unknown. In Row 1, Column 1 of Fig. \ref{fig:res_unknown}, we outline $\unsafeset$ in black and overlay $\guarunsafe$, $\guarsafe$, and the six provided demonstrations, synthetically generated via trajectory optimization. We note that due to the chosen control constraints and U-shape, there are parts of $\unsafeset$ (a subset of the white region in Fig. \ref{fig:res_unknown}, Row 1, Column 1) which cannot be implied unsafe by sampled unsafe trajectories and the parameterization (see Theorem \ref{thm:learnability_dt_parametric}). As a result, $\guarunsafe$ may not fully cover $\unsafeset$, even with more demonstrations (Fig. \ref{fig:res_unknown}, Row 1, Column 3). Note that the decrease in coverage\footnote{Coverage is measured as the intersection over union (IoU) of the relevant sets (see legends for exact formula).} at the third demonstration is due to a increase from a two-box parameterization to a three-box parameterization. Likewise, the accuracy\footnote{In all experiments, computed accuracies are: IP (safe) = $\textrm{Vol}(\guarsafe \cap \safeset)/\textrm{Vol}(\guarsafe)$, IP (unsafe) = $\textrm{Vol}(\guarunsafe \cap \unsafeset)/\textrm{Vol}(\guarunsafe)$, NN (safe) = $(\sum_{i=1}^q \mathbf{I}_{(\state_i \in \safeset) \wedge (\textrm{NN classified } \state_i \textrm{ as safe} )})/\sum_{i=1}^q \mathbf{I}_{\state_i \in \safeset}$, NN (unsafe) = $(\sum_{i=1}^q \mathbf{I}_{(\state_i \in \unsafeset) \wedge (\textrm{NN classified } \state_i \textrm{ as unsafe}})/\sum_{i=1}^q \mathbf{I}_{\state_i \in \unsafeset}$, where $\state_1, \ldots, \state_q$ are query states sampled from $\guarunsafe \cup \guarsafe$ and $\mathbf{I}_{(\cdot)}$ is the indicator function. Note that NN accuracy is computed only on $(\guarsafe \cup \guarunsafe) \subseteq \constraintspace$.} decreases at the second demonstration due to over-approximation of $\unsafeset$ with two boxes (Fig. \ref{fig:res_unknown}, Row 1, Column 4), but this over-approximation vanishes when switching to the three-box parameterization (which is exact; hence $\guarsafe$ and $\guarunsafe$ are guaranteed conservative, c.f. Theorem \ref{thm:knownconservative}). The grid-based method in \cite{extended_version} always has perfect accuracy, since it does not extrapolate beyond the observed trajectories. However, as a result of that, it also yields low coverage (Fig. \ref{fig:res_unknown}, Row 1, Column 2). The NN baseline achieves lower accuracy for the unsafe set as it misclassifies some corners of the U. Recovering a feasible $\theta$ using a multi-box variant of Problem \ref{prob:parametric_polytope_program} recovers $\unsafeset$ exactly (Fig. \ref{fig:res_unknown}, Row 1, Column 5). Finally, we note that in this (and future) examples, demonstrations were specifically chosen to be informative about the constraint. We present a version of this example in Appendix \ref{sec:app_results} with random demonstrations and show that the constraint is still learned (albeit needing more demonstrations).
\vspace{-3pt}

\textbf{Infinite boxes}: To show that our method can still learn a constraint that cannot be easily expressed using a chosen parameterization, we limit our parameterization to an unknown number of axis-aligned boxes and attempt to learn a diagonal ``I" unsafe set (see Fig. \ref{fig:res_unknown}, Row 2). This is a particularly difficult example, since an infinite number of axis-aligned boxes will be needed to recover $\unsafeset$ exactly. However, for finite data, only a finite number of boxes will be needed; in particular, for 1, 2, 3, and 4 demonstrations (which are synthetically generated assuming kinematic system constraints), 3, 5, 6, and 6 boxes are required to generate a parameterization consistent with the data (see Fig. \ref{fig:res_unknown}, Row 2, Column 1). Also overlaid in Fig. \ref{fig:res_unknown}, Row 2, Column 1 are $\guarunsafe$ and $\guarsafe$, which are approximated by solving Problem \ref{prob:parametric_volume_program} for randomly sampled $\cstate_\textrm{center}$. Compared to the gridded formulation in \cite{extended_version} (see Fig. \ref{fig:res_unknown}, Row 2, Column 3), $\guarsafe$ and $\guarunsafe$ cover $\safeset$ and $\unsafeset$ far better due to the parameterization enabling the IP to extrapolate more from the demonstrations. Furthermore, we note that while the gridded case has perfect accuracy for the safe set, it does not for the unsafe set, due to grid alignment \cite{extended_version}. Overall, the multi-box variant of Problem \ref{prob:parametric_polytope_program} recovers $\unsafeset$ well (Fig. \ref{fig:res_unknown}, Row 2, Column 5), and the remaining gap can be improved with more data. Last, we note that the NN baseline reaches comparable accuracies here (Fig. \ref{fig:res_unknown}, Row 2, Column 4), since our method suffers from a few disadvantages for this particular example. First, attempting to represent the ``I" with a finite number of boxes introduces a modeling bias that the NN does not have. Second, since the system is kinematic and the constraint is low-dimensional, many unsafe trajectories can be sampled, providing good coverage of the unsafe set. We show later that for higher dimensional constraints/systems with highly constrained dynamics, it becomes difficult to gather enough data for the NN to perform well.

\vspace{-8pt}
\subsection{High-dimensional examples}\label{sec:res_highdim}
\vspace{-3pt}
\textbf{6D pose constraint for a 7-DOF robot arm}: In this example, we learn a 6D hyper-rectangular pose constraint for the end effector of a 7-DOF Kuka iiwa arm. One such setting is when the robot is to bring a cup to a human while ensuring its contents do not spill (angle constraint) and proxemics constraints (i.e. the end effector never gets too close to the human) are satisfied (position constraint). We examine this problem for the cases of optimal and suboptimal demonstrations.

\vspace{-5pt}
\textit{Demonstration setup: } The end effector orientation (parametrized in Euler angles) and position are constrained to satisfy $(\alpha, \beta, \gamma) \in [\underline \alpha, \bar \alpha]\times [\underline \beta, \bar \beta]\times [\underline \gamma, \bar \gamma]$ and $(x, y, z) \in [\underline x, \bar x]\times [\underline y, \bar y]\times [\underline z, \bar z]$ (see Fig. \ref{fig:arm}, Column 1). For the optimal case, we synthetically generate seven demonstrations minimizing joint-space trajectory length. For the suboptimal case, five suboptimal continuous-time demonstrations approximately optimizing joint-space trajectory length are recorded in a virtual reality environment, where a human demonstrator moves the arm from desired start to goal end effector configurations using an HTC Vive (see Fig. \ref{fig:vive}). The demonstrations are time-discretized for lower-cost trajectory sampling \cite{extended_version}. In both cases, the constraint is recovered with Problem \ref{prob:parametric_polytope_program}, where $H(\theta) = [I, -I]^\top$ and $h(\theta) = \theta = [\bar x, \bar y, \bar z, \bar \alpha, \bar \beta, \bar \gamma, \underline x, \underline y, \underline z, \underline \alpha, \underline \beta, \underline \gamma]^\top$. For the suboptimal case, slack variables are added to ensure feasibility of Problem \ref{prob:parametric_polytope_program}, and for a suboptimal demonstration of cost $\hat c$, we only use trajectories of cost less than $0.9\hat c$ as unsafe trajectories.

\begin{figure}
	\vspace{-7pt}
	\centering
	\begin{subfigure}[b]{\textwidth}
         \centering
         \includegraphics[width=\linewidth]{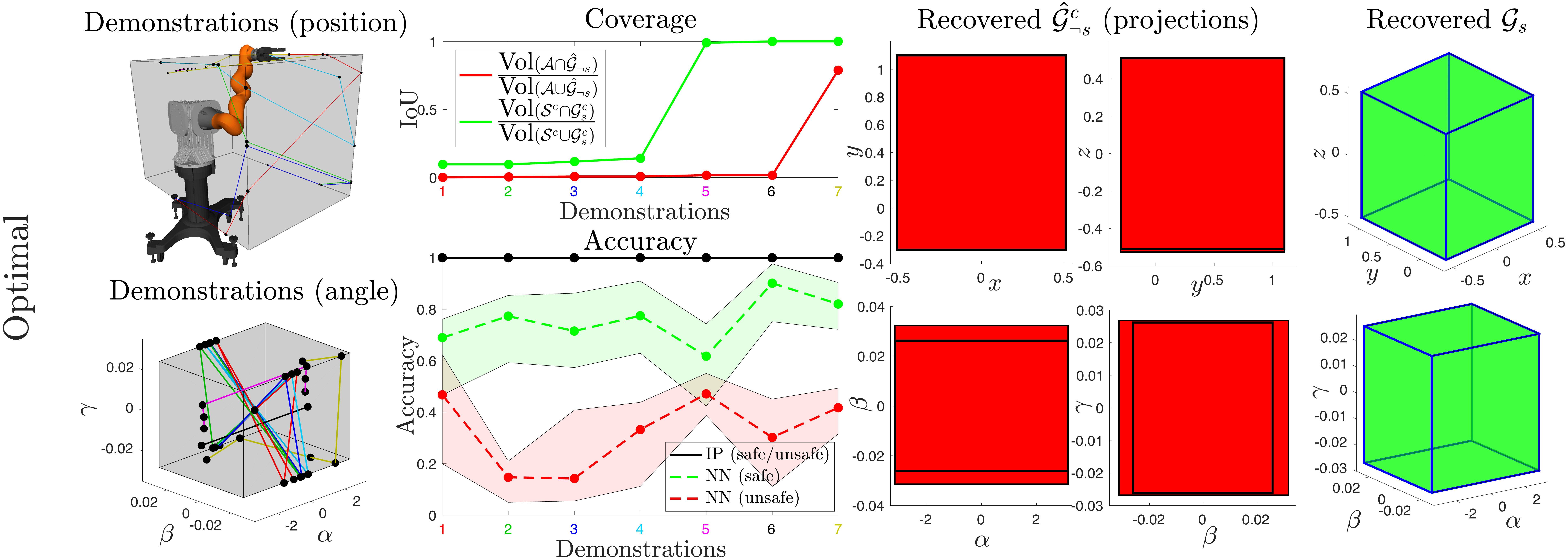}
     \end{subfigure}\vspace{-1pt}
     \begin{subfigure}[b]{\textwidth}
         \centering
         \includegraphics[width=\linewidth]{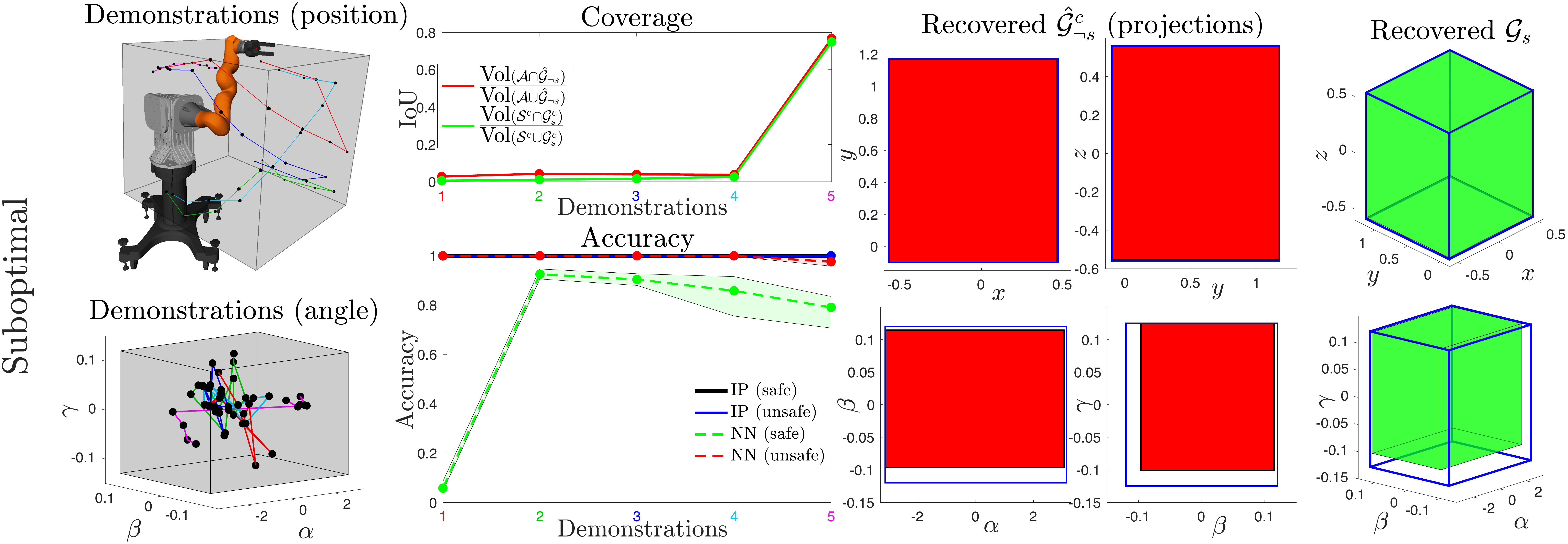}
     \end{subfigure}
     \vspace{-12pt}
     \caption{\small\textbf{Rows 1:2}: 7-DOF arm, optimal demonstrations \textbf{Col. 1}: Experimental setup. Gray boxes are projections of $\unsafeset$. Projections of demonstrations in position/angle space are overlaid. \textbf{Col. 2}: \textit{Top}: Comparing safe/unsafe set coverage as a function of demonstrations. \textit{Bottom}: Prediction accuracy. \textbf{Cols. 3-4}: projections of $\mathcal{\hat G}_{\neg s}$ using all demonstrations. For the optimal case, the red boxes over-approximate the blue boxes, as the complement of $\mathcal{\hat G}_{\neg s}$ (not $\mathcal{\hat G}_{\neg s}$ itself) is plotted. \textbf{Col. 5}: projections of $\guarsafe$ using all demonstrations. \textbf{Rows 3:4}: Same for 7-DOF arm, suboptimal demonstrations.}
     \vspace{-12pt}
     \label{fig:arm}
\end{figure}

\vspace{-5pt}
\textit{Results}: The coverage plots (Fig. \ref{fig:arm}, Rows 1 and 3, Col. 2) show that as the number of demonstrations increases, $\guarsafe$/$\guarunsafe$ approach the true safe/unsafe sets $\safeset$/$\unsafeset$ \footnote{For the unsafe sets, the IoUs are computed between $\guarunsafe^c$ and $\unsafeset^c$, as in high dimensions, the IoU changes more smoothly for the complements than the IoU between $\guarunsafe$ and $\unsafeset$, so we plot the the former for visual clarity.}. For the suboptimal case, the low IoU values for lower numbers of demonstrations is due to overapproximation of the unsafe set in the $\alpha$ component (arising from continuous-time discretization and imperfect knowledge of the suboptimality bound); the fifth demonstration, where $\alpha$ takes values near $-\pi, \pi$ greatly reduces this overapproximation. The accuracy plots (Fig. \ref{fig:arm}, Rows 2 and 4, Col. 2) present results consistent with the theory: for the optimal case, all constraint states in $\guarsafe$ and $\guarunsafe$ are truly safe and unsafe (Theorem \ref{thm:knownconservative}), and the small over-approximation for the suboptimal case is consistent with the continuous-time conservativeness (Theorem \ref{thm:app_c2d}). Note that the NN accuracy is lower and can oscillate with demonstrations, since it finds just a single constraint which is approximately consistent with the data, while our method classifies safety by consulting all possible constraints which are exactly consistent with the data, thus performing more consistently. The NN performs better on the suboptimal case than it does on the optimal case, as more unsafe trajectories are sampled due to the suboptimality, improving coverage of the unsafe set. The projections of $\mathcal{\hat G}_{\neg s}^c$ (Fig. \ref{fig:arm}, Cols. 3-4, in red), where $\mathcal{\hat G}_{\neg s} \subseteq \mathcal{G}_{\neg s}$ is obtained using the method in Appendix \ref{sec:app_extraction}, are compared to the safe set (blue outline), showing that the two match nearly exactly (though the gap for the suboptimal case is larger), and the gap can be likely reduced with more demonstrations. The projections of $\guarsafe$ (Fig. \ref{fig:arm}, Col. 5) match exactly with $\unsafeset$ for the optimal case (true safe set is outlined in blue) and match closely for the suboptimal case. Note that $\guarsafe\subseteq \safeset$, as is the case for all axis-aligned box parameterizations.

\begin{wrapfigure}{r}{0.5\linewidth}
\centering
\vspace{-13pt}
\hspace{-8pt}
\includegraphics[width=\linewidth]{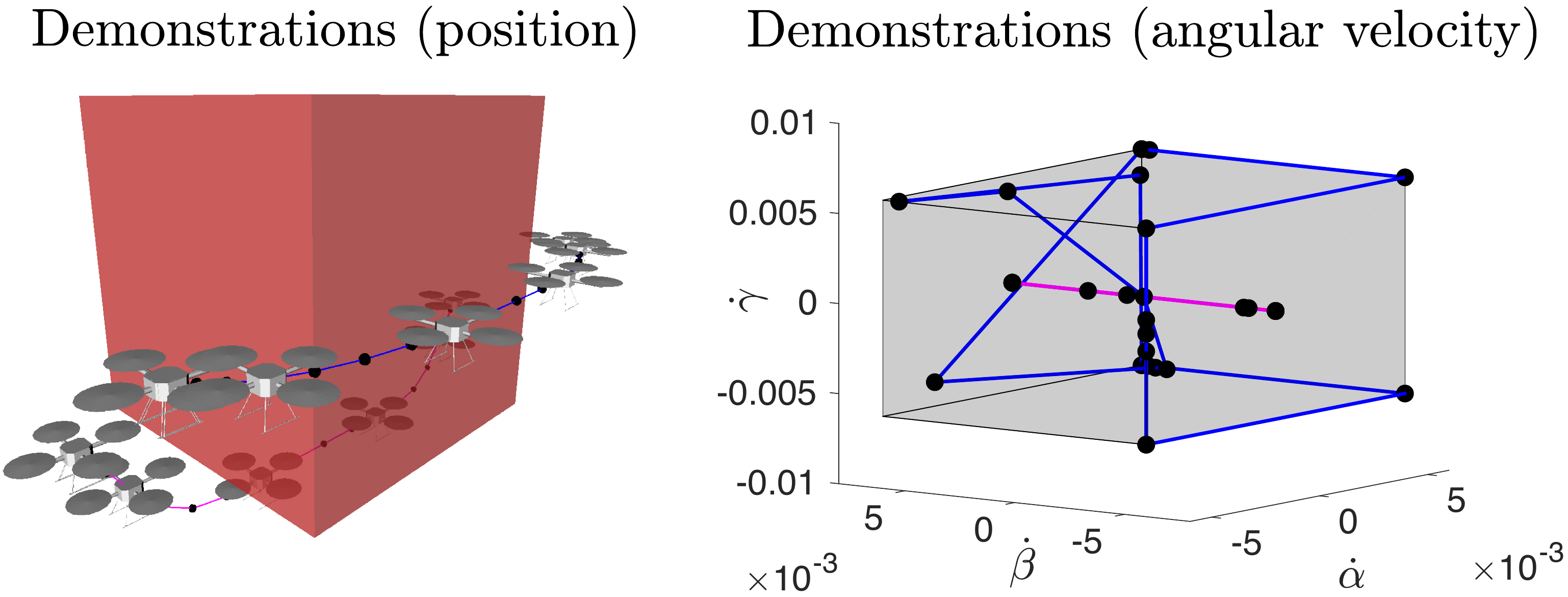}
\vspace{-6pt}
\caption{\small\textbf{Left}: Known unsafe set in $(x,y,z)$ (red); $(x,y,z)$ components of demonstrations are overlaid. \textbf{Right}: Unknown unsafe set in $(\dot\alpha,\dot\beta,\dot\gamma)$ (gray); $(\dot\alpha,\dot\beta,\dot\gamma)$ components of demonstrations are overlaid.}
\vspace{-20pt}
\label{fig:quad_demos}
\end{wrapfigure}\textbf{3D constraint for 12D quadrotor model:} We learn a 3D box angular velocity constraint for a quadrotor with discrete-time 12D dynamics (see Appendix \ref{sec:app_experimental} for details). In this scenario, the quadrotor must avoid an a priori known unsafe set in position space while also ensuring that angular velocities are below a threshold: $(\dot \alpha, \dot \beta, \dot \gamma) \in [\underline{\dot\alpha}, \bar{\dot\alpha}]\times [\underline{\dot\beta}, \bar{\dot\beta}]\times [\underline{\dot\gamma}, \bar{\dot\gamma}]$. The $(\dot\alpha, \dot\beta, \dot\gamma)$ safe set is to be inferred from two demonstrations (see Fig. \ref{fig:quad_demos}). The constraint is recovered with Problem \ref{prob:parametric_polytope_program}, where $H(\theta) = [I, -I]^\top$ and $h(\theta) = \theta = [\bar{\dot\alpha}, \bar{\dot\beta}, \bar{\dot\gamma}, \underline{\dot\alpha}, \underline{\dot\beta}, \underline{\dot\gamma}]^\top$. Fig. \ref{fig:quad_opt} shows that with more demonstrations, $\guarsafe$ approaches the true safe set $\safeset$ and $\guarunsafe$ approaches the true unsafe set $\unsafeset$, respectively. Consistent with Theorem \ref{thm:knownconservative}, our method has perfect accuracy in $\guarunsafe$ and $\guarsafe$. Here, the NN struggles more compared to the arm examples since due to the more constrained dynamics, fewer unsafe trajectories can be sampled, and a parameterization needs to be leveraged in order to say more about the unsafe set. The remaining columns of Fig. \ref{fig:quad_opt} show that we recover $\guarunsafe$ and $\guarsafe$ exactly (the true safe set is outlined in blue).

\begin{figure}
\centering
\includegraphics[width=\linewidth]{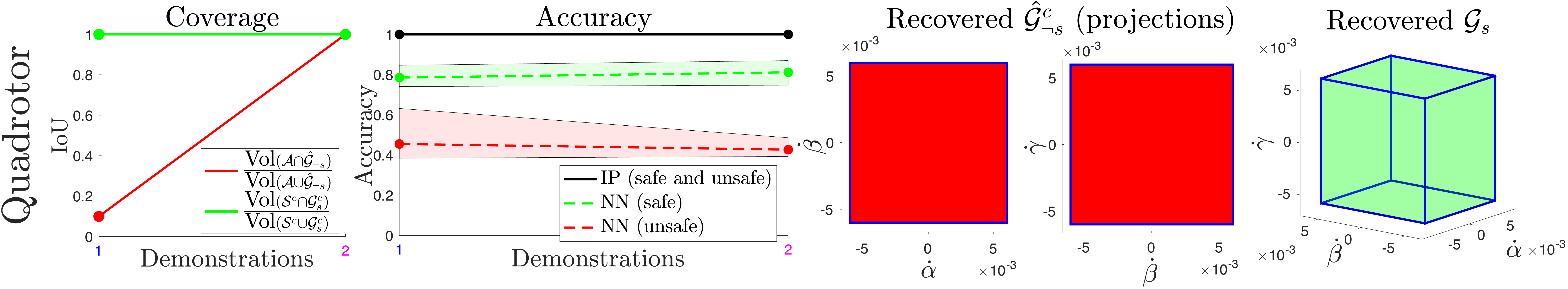}
\vspace{-17pt}
\caption{\small Constraint recovery for a 12D quadrotor. \textbf{Col. 1}: Coverage of $\unsafeset$ and $\safeset$. \textbf{Col. 2}: Classification error between $\guarsafe/\safeset$ and $\guarunsafe/\unsafeset$. \textbf{Cols. 3-4}: $\mathcal{\hat G}_{\neg s}$ using all demonstrations. \textbf{Col. 5}: $\guarsafe$ using all demonstrations.}
\label{fig:quad_opt}
\end{figure}
\vspace{-4pt}
\subsection{Planar pushing example}\label{sec:res_modelfree}
\vspace{-4pt}
\begin{figure}
\centering
\includegraphics[width=\linewidth]{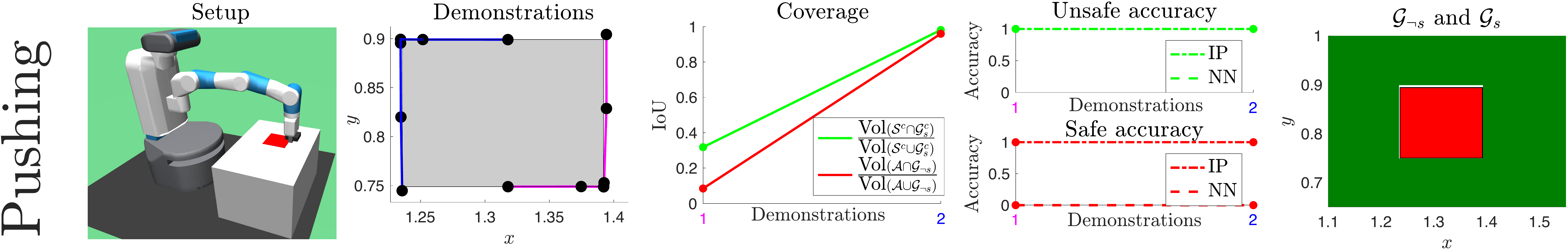}
\vspace{-17pt}
\caption{\small Constraint recovery without closed-form dynamics. \textbf{Cols. 1-2}: Setup (unsafe set in red) and demonstrations (unsafe set in gray). \textbf{Cols. 3-4}: Coverage of $\unsafeset$ and $\safeset$; classification accuracy. \textbf{Col. 5}: $\guarunsafe$ / $\guarsafe$ using all demonstrations.}
\vspace{-15pt}
\label{fig:pushing}
\end{figure}

In this section, using the $\texttt{FetchPush-v1}$ environment in OpenAI Gym \cite{openai}, we aim to learn a 2D box unsafe set on the center-of-mass (CoM) of a block pushed by the Fetch arm (see Fig. \ref{fig:pushing}) using two demonstrations. Here, the dynamics of the block CoM are not known in closed form, but rollouts can still be sampled using the simulator. Since the block CoM is highly underactuated, it is not possible to sample short sub-trajectories. Thus, without leveraging a parameterization, the constraint recovery problem is very ill-posed. Furthermore, while our method can explicitly consider the unsafeness in longer unsafe trajectories (at least one state is unsafe), the NN struggles with this example as it fails to accurately model that fact. Overall, Fig. \ref{fig:pushing} presents that $\guarunsafe$/$\guarsafe$ match up well with $\unsafeset$/$\safeset$, and our classification accuracy for safeness/unsafeness is perfect across demonstrations.

\vspace{-3pt}
\section{Discussion and Conclusion}\label{sec:conclusion}
\vspace{-3pt}
In this paper, we present a method capable of learning parametric constraints in high-dimensional spaces with and without known parameterizations. We also present a method for extracting volumes of guaranteed safe and guaranteed unsafe states, information which can be directly used in a planner to enforce safety constraints. We analyze our algorithm, showing that these recovered guaranteed safe/unsafe states are truly safe/unsafe under mild assumptions. We evaluate the method by learning a variety of constraints defined in high-dimensional spaces for systems with high-dimensional dynamics. One shortcoming of our work is scalability with the amount of data, due to the number of integer variables growing linearly with the number of safe/unsafe trajectories. As a result, learning constraints without extensive sampling of unsafe trajectories is a direction of future work.


\clearpage
\acknowledgments{This work was supported in part by a National Defense Science and Engineering Graduate (NDSEG) Fellowship, Office of Naval Research (ONR) grants N00014-18-1-2501 and N00014-17-1-2050, and National Science Foundation (NSF) grants ECCS-1553873 and IIS-1750489.}


\bibliography{example}  

\appendix
\newpage
\section{Detailed algorithm block}\label{sec:app_algorithm}

\begin{algorithm}
\SetAlgoLined
\SetKwInOut{Input}{Input}
\SetKwInOut{Output}{Output}
\Output{$\theta$ (a feasible unsafe/safe set describing the safe/unsafe trajectories),\\ $\guarsafe$ (the set of guaranteed safe constraint states),\\ $\guarunsafe$ (the set of guaranteed unsafe constraint states)}
\Input{$\traj_s = \{\traj_1^*, \ldots, \traj_{\numsafe}^*\}$, $c_\task(\cdot)$, $\textrm{known constraints}$, $\{\cstate_\textrm{query}^q\}_{q=1}^Q$}
 $\xi_{\neg s} \leftarrow \{ \}$\;
 \tcc{Sample unsafe trajectories $\traj_{\neg s}$}
 \For{i = 1:$\numsafe$}{
   $\xi_{\neg s} \leftarrow \xi_{\neg s} \cup\textsf{HitAndRun}(\traj_i^*)$\;}
  \tcc{Constraint recovery}
  $\theta \leftarrow$ Problem Y$(\traj_s, \traj_{\neg s})$\;
  \tcc{Y = \ref{prob:parametric_feasibility_program} if general parameterization}
  \tcc{Y = \ref{prob:parametric_polytope_program} if polytope parameterization}
  $\guarsafe, \guarunsafe \leftarrow \{\}, \{\}$\;
  \tcc{Guaranteed safe/unsafe recovery}
  \uIf{$\textrm{general parameterization}$}{
   \For{$q = 1,\ldots, Q$}{
   \tcc{Extract safe/unsafe volume around query point $\cstate_\textrm{query}^q$}
   $\guarsafe(\cstate_\textrm{query}^q), \guarunsafe(\cstate_\textrm{query}^q) \leftarrow$ Problem \ref{prob:parametric_volume_program}$(\cstate_\textrm{query}^q)$\;
   $\guarsafe \leftarrow \guarsafe \cup \guarsafe(\cstate_\textrm{query}^q)$\;
   $\guarunsafe \leftarrow \guarunsafe \cup \guarunsafe(\cstate_\textrm{query}^q)$\;}
   }\uElseIf{\textrm{axis-aligned hyper-rectangle parameterization}}{
   $\guarsafe, \mathcal{\hat G}_{\neg s} \leftarrow $ Procedure in Appendix \ref{sec:axis_aligned_box}\;
   }\uElseIf{\textrm{convex parameterization}}{
   $\guarsafe, \mathcal{\hat G}_{\neg s} \leftarrow $ Procedure in Appendix \ref{sec:convex}\;}
 \caption{Overall method}\label{alg:total}
\end{algorithm}

\section{Extraction of $\guarsafe$ and $\guarunsafe$}\label{sec:app_extraction}

In this section, we discuss specific ways of extracting sets of guaranteed safe/unsafe states for axis-aligned hyper-rectangles (this method is used for all numerical examples in Section \ref{sec:res_highdim} and Section \ref{sec:res_modelfree}) and for convex parameterizations.

\subsection{Axis-aligned hyper-rectangle parameterization}\label{sec:axis_aligned_box}

In this parameterization, $\constraintspace \subseteq \mathbb{R}^n$, $\theta = [\underline \cstate_1, \bar \cstate_1, \ldots, \underline \cstate_n, \bar \cstate_n]$, and $g(\cstate, \theta) \le 0 \Leftrightarrow H(\theta)\cstate \le h(\theta)$, where $H(\theta)k = [I_{n\times n}, -I_{n\times n}]^\top$ and $h(\theta) = [\bar k_1, \ldots, \bar k_n, \underline k_1, \ldots, \underline k_n]^\top$. Here, $\underline \cstate_i$ and $\bar \cstate_i$ are the lower and upper bounds of the hyper-rectangle for coordinate $i$.
	
		As the set of axis-aligned hyper-rectangles is closed under intersection, $\guarunsafe$ is also an axis-aligned hyper-rectangle, the axis-aligned bounding box of any two constraint states $\cstate_1, \cstate_2 \in \guarunsafe$ is also contained in $\guarunsafe$. This also implies that $\guarunsafe$ can be fully described by finding the top and bottom corners $[\underline \cstate_1, \ldots, \underline \cstate_n]^\top$ and $[\bar \cstate_1, \ldots, \bar \cstate_n]^\top$. Suppose we start with a known $\cstate \in \guarunsafe$. Then, finding $[\underline \cstate_1, \ldots, \underline \cstate_n]^\top$ amounts to performing a binary search for each of the $n$ dimensions, and the same holds for finding $[\bar \cstate_1, \ldots, \bar \cstate_n]^\top$.
		
		Recovering $\guarsafe$ is not as straightforward, as the complement of axis-aligned boxes is not closed under intersection. While we can still solve Problem \ref{prob:parametric_volume_program} to recover $\guarsafe$, an inner approximation of $\guarsafe$ can be more efficiently obtained: starting at a constraint state $\cstate \in \guarunsafe$, $2n$ line searches can be performed to find the two points of transition to $\guarunsafe$ in each constraint coordinate. Denote as $\mathcal{\hat G}_{s}$ the complement of the axis-aligned bounding box of these $2n$ points; $\mathcal{\hat G}_{s}$ is an inner approximation of $\guarsafe$, as $\guarsafe = ( \bigcap_{\theta \in \feas} \{ x\ |\ g(x, \theta) \le 0\} )^c \supseteq \textrm{AABB}(\bigcap_{\theta \in \feas} \{ x\ |\ g(x, \theta) \le 0\})^c$, where $\textrm{AABB}(\cdot)$ denotes the axis-aligned bounding box of a set of points. For example, consider the scenario in Fig. \ref{fig:inner_approx} where there are only two feasible parameters, $\theta_1$ and $\theta_2$. Here, $\guarsafe$ is $(\unsafeset(\theta_1) \cup \unsafeset(\theta_2))^c$ and $\hat\guarsafe$ under-approximates the safe set ($\guarsafe$ is in general not representable as the complement of an axis-aligned box).
		
		\begin{figure}
			\centering
			\includegraphics[width=0.8\linewidth]{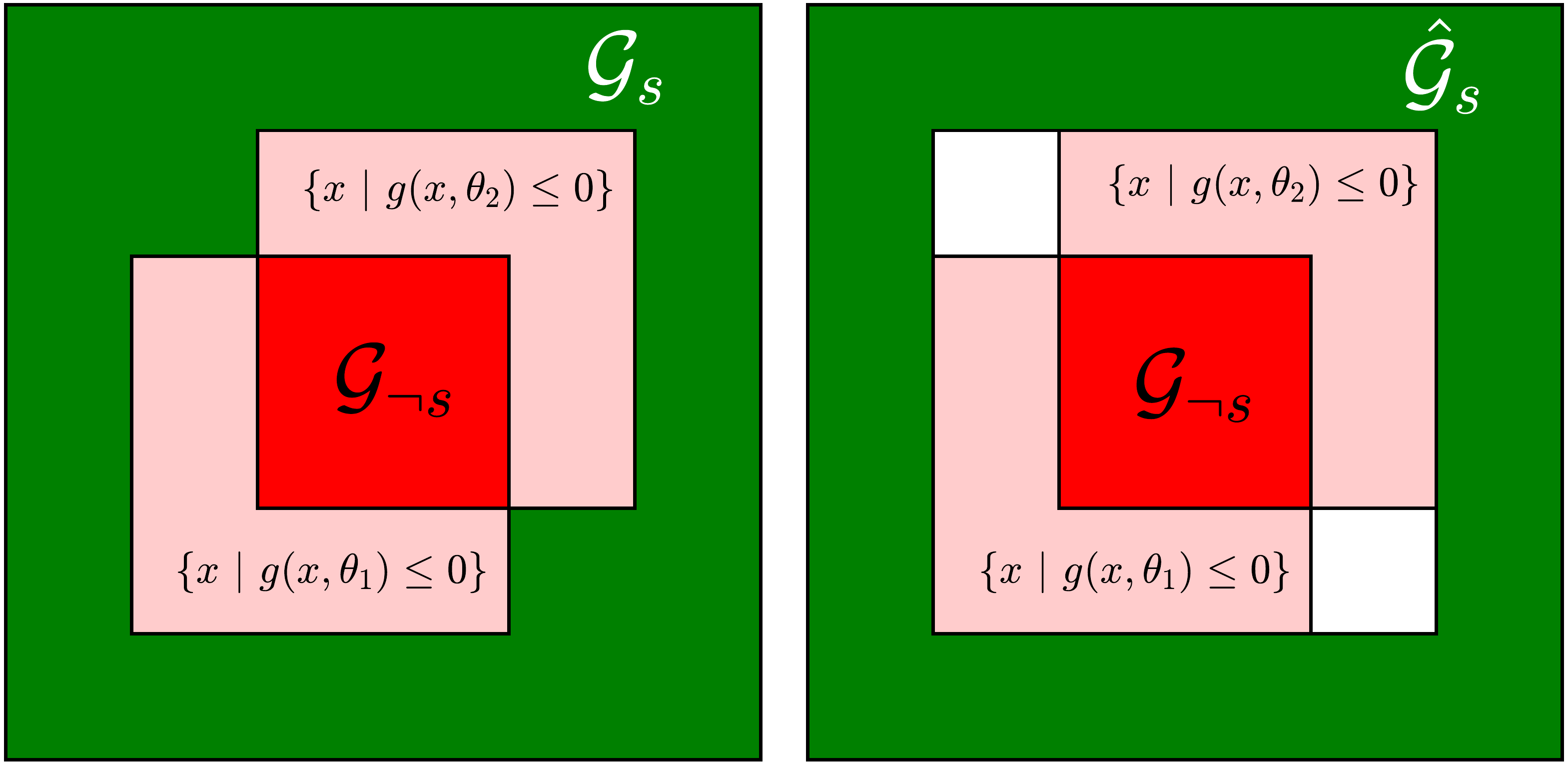}
			\caption{Comparison of the true $\guarsafe$ (left, in green) and the extracted inner approximation $\hat\guarsafe$ (right, in green).}
			\label{fig:inner_approx}
		\end{figure}

\subsection{Convex parameterization}\label{sec:convex}
In this parameterization, for fixed $\theta$, $\{ \cstate\ |\ g(\cstate, \theta) \le 0\}$ is convex.

		While apart from solving Problem \ref{prob:parametric_volume_program} it is hard to recover $\guarunsafe$ exactly, an inner approximation of $\guarunsafe$ can be extracted more efficiently by taking the convex hull of any $\cstate_1, \cstate_2, \ldots \in \guarunsafe$, as the convex hull is the minimal convex set containing $\cstate_1, \cstate_2, \ldots$.

The same approaches apply for recovering $\guarsafe$ when it is instead the safe set which is an axis-aligned hyper-rectangle or a convex set.

\section{Theoretical Analysis (Expanded)}\label{sec:app_theory}

In this section, we present theoretical analysis on our parametric constraint learning algorithm. In particular, we analyze the limits of what constraint states can be learned guaranteed unsafe/safe (Section \ref{sec:app_learnability}) as well as the conditions under which our algorithm is guaranteed to learn a conservative estimate of the safe and unsafe sets (Section \ref{sec:app_conservativeness}). For ease of reading, we repeat the theorem statements from the main body (the corresponding theorem numbers from the main body are listed in the theorem statement). We develop the theory for $\constraintspace = \statespace$ for legibility, but the results can be easily extended to general $\constraintspace$.

\subsection{Learnability}\label{sec:app_learnability}

In this section, we develop results for learnability of the unsafe set in the parametric case. We begin with the following notation:

\begin{definition}[Signed distance]
	Signed distance from point $p \in \mathbb{R}^m$ to set $\mathcal{S} \subseteq \mathbb{R}^m$, $\sd(p, \mathcal{S}) = -\inf_{y \in \partial\mathcal{S}} \Vert p - y \Vert$ if $p\in \mathcal{S}$; $\inf_{y \in \partial\mathcal{S}} \Vert p - y \Vert$ if $p\in \mathcal{S}^c$.
\end{definition}

\begin{definition}[$\Delta x$-shell]
		For a discrete time system satisfying $\Vert x_{t+1} - x_t \Vert \le \umax$ for all $t$, denote the $\umax$ shell of the unsafe set as: $\unsafeset_{\Delta x}\doteq \{ x \in \unsafeset \ |\ -\umax \le \sd(x, \unsafeset) \le 0 \}$.
\end{definition}

\begin{definition}[Implied unsafe set]
For some set $\mathcal{B} \subseteq \Theta$, denote $I(\mathcal{B}) \doteq \bigcap_{ \theta \in \mathcal{B} } \{x\ |\ g(x, \theta) \le 0 \}$ as the set of states that are implied unsafe by restricting the parameter set to $\mathcal{B}$. In words, $I(\mathcal{B})$ is the set of states for which all $\theta \in \mathcal{B}$ mark as unsafe.
\end{definition}

\begin{definition}[Feasible set $\feas$]
	Denote as $\feas$ the feasible set of Problem \ref{prob:parametric_feasibility_program} with $\numsafe$ demonstrations and $\numunsafe$ unsafe trajectories sampled using the hit-and-run method presented in Section \ref{sec:method_sampling}:
	\begin{align*}
		\feas = \{\theta\ |\ &\forall i \in \{1, \ldots, \numsafe\}, \forall x \in \traj_i^*, g(x, \theta) > 0, \\
		& \forall j \in \{1, \ldots, \numunsafe\},\exists x \in \traj_j, g(x, \theta) \le 0\}.
	\end{align*}
\end{definition}

\begin{definition}[Learnability and learnable set $\guarunsafe^*$]
	A state $\state \in \unsafeset$ is learnable if there exists \textbf{any} set of $\numsafe$ demonstrations and $\numunsafe$ unsafe trajectories sampled using the hit-and-run method presented in Section \ref{sec:method_sampling}, where $\numsafe$ and $\numunsafe$ may be infinite, such that $\state \in \mathcal{I}(\feas)$. Accordingly, we define the \textit{learnable} set of unsafe states $\guarunsafe^*$ as the union of all learnable states. Note that by this definition, a state $\state_s \in \safeset$ is always learnable, since there always exists some safe demonstration passing through $\state_s$.
\end{definition}

\begin{lemma}\label{lem:contain}
	Suppose $\mathcal{B} \subseteq \mathcal{\hat B}$, for some other set $\mathcal{\hat B}$. Then, $I(\mathcal{\hat B}) \subseteq I(\mathcal{B})$.
\end{lemma}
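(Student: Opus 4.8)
The plan is to prove this by a direct element-chasing argument, exploiting the fact that $I(\cdot)$ is defined as an intersection indexed by the parameter set, and that intersecting over a \emph{larger} index set only imposes \emph{more} constraints on the points that survive. In other words, the lemma is simply the statement that $I$ is antitone (order-reversing) with respect to set inclusion, and I would prove it by unwinding the definition of $I(\mathcal{B}) = \bigcap_{\theta \in \mathcal{B}} \{x \mid g(x,\theta) \le 0\}$.

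Concretely, I would start by fixing an arbitrary $x \in I(\mathcal{\hat B})$ and showing $x \in I(\mathcal{B})$. By the definition of the implied unsafe set, membership $x \in I(\mathcal{\hat B})$ means precisely that $g(x, \theta) \le 0$ holds for \emph{every} $\theta \in \mathcal{\hat B}$. The key step is the hypothesis $\mathcal{B} \subseteq \mathcal{\hat B}$: since every $\theta \in \mathcal{B}$ is in particular an element of $\mathcal{\hat B}$, the condition $g(x, \theta) \le 0$ holds for all $\theta \in \mathcal{B}$ as well. This is exactly the defining condition for $x \in I(\mathcal{B})$, so the inclusion follows. Since $x$ was arbitrary, $I(\mathcal{\hat B}) \subseteq I(\mathcal{B})$.

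I do not expect any genuine obstacle here, as the result is an elementary monotonicity property of indexed intersections; the only point requiring care is bookkeeping the direction of the inclusion, namely that enlarging the parameter set from $\mathcal{B}$ to $\mathcal{\hat B}$ \emph{shrinks} (or leaves unchanged) the implied unsafe set rather than enlarging it, which is why the conclusion reverses the inclusion. An equally clean alternative would be to argue directly at the level of the intersection: because the collection $\{\,\{x \mid g(x,\theta)\le 0\} : \theta \in \mathcal{B}\,\}$ is a subfamily of $\{\,\{x \mid g(x,\theta)\le 0\} : \theta \in \mathcal{\hat B}\,\}$, the intersection over the smaller family contains the intersection over the larger one. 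I would present the element-chasing version as the main proof for clarity and note that this lemma will subsequently be used to relate the implied unsafe set $I(\feas)$ obtained under different (nested) feasible parameter sets $\feas$.
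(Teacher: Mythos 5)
Your proof is correct and matches the paper's argument in substance: the paper proves the lemma by writing $\mathcal{\hat B} = \mathcal{B} \cup (\mathcal{\hat B} \setminus \mathcal{B})$ and noting the intersection over the larger index set is contained in the intersection over the subfamily, which is exactly the set-level alternative you mention, and your element-chasing version is just the pointwise unwinding of that same monotonicity fact. No gaps.
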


\begin{proof}
	By definition, 
	\begin{align*}
		I(\mathcal{\hat B}) &= \bigcap_{ \theta \in \mathcal{\hat B} } \{x\ |\ g(x, \theta) \le 0 \} \\
		&= \bigcap_{ \theta \in \big( \mathcal{B} \cup (\mathcal{\hat B} \setminus \mathcal{B}) \big) } \{x\ |\ g(x, \theta) \le 0 \} \\
		&\subseteq \bigcap_{ \theta \in \mathcal{B} } \{x\ |\ g(x, \theta) \le 0 \} \\
		&= I(\mathcal{B}).
	\end{align*}
\end{proof}

\begin{lemma}\label{lem:atleastone}
	Each unsafe trajectory $\traj_j$ with start and goal states in the safe set contains at least one state in the $\Delta x$-shell $\unsafeset_{\Delta x}$: $\forall j \in \{1, \ldots, \numunsafe\}, \exists x \in \traj_j, x \in \unsafeset_{\Delta x}$.
\end{lemma}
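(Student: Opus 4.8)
The plan is to locate the first state along $\traj_j$ that crosses into the unsafe set and to show that this particular state is necessarily shallow, i.e. lies within depth $\umax$ of $\partial\unsafeset$, which is exactly the condition defining $\unsafeset_{\Delta x}$. Fix an arbitrary unsafe trajectory $\traj_j$ with start and goal in $\safeset$; the argument below is uniform in $j$.

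First I would use the hypotheses to find a safe-to-unsafe transition. The start state lies in $\safeset = \unsafeset^c$ by assumption, and $\traj_j$ must contain at least one state of the true unsafe set $\unsafeset$: being a lower-cost trajectory that still obeys the known constraints, if it avoided $\unsafeset$ entirely it would be a safe solution cheaper than the demonstration that induced it, contradicting that demonstration's optimality. Hence there is a smallest index for which $x_{t+1} \in \unsafeset$ while $x_t \in \unsafeset^c$, and since $\traj_j$ is dynamically feasible the per-step displacement bound $\Vert x_{t+1} - x_t \Vert \le \umax$ holds.

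Next I would bound the signed distance of $x_{t+1}$. Because $x_{t+1} \in \unsafeset$, the definition of signed distance gives $\sd(x_{t+1}, \unsafeset) \le 0$ immediately, so only the lower bound $\sd(x_{t+1}, \unsafeset) \ge -\umax$ remains. For this I would consider the segment $\gamma(s) = (1-s)x_{t+1} + s\,x_t$, $s \in [0,1]$, running from $x_{t+1} \in \unsafeset$ to $x_t \in \unsafeset^c$. Using that $\unsafeset$ is closed (it is the sublevel set $\{g \le 0\}$ of the continuous $g$), setting $s^* = \sup\{s : \gamma([0,s]) \subseteq \unsafeset\}$ produces a point $y = \gamma(s^*) \in \overline{\unsafeset} \cap \overline{\unsafeset^c} = \partial\unsafeset$ that lies on the segment. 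Then $\Vert x_{t+1} - y \Vert = s^* \Vert x_{t+1} - x_t \Vert \le \umax$, so $\inf_{w \in \partial\unsafeset}\Vert x_{t+1} - w\Vert \le \umax$ and therefore $\sd(x_{t+1}, \unsafeset) \ge -\umax$. Combining the two bounds yields $-\umax \le \sd(x_{t+1}, \unsafeset) \le 0$, i.e. $x_{t+1} \in \unsafeset_{\Delta x}$, which is the claim.

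The hard part will be the topological step asserting that the segment from an exterior point $x_t$ to an interior point $x_{t+1}$ actually meets $\partial\unsafeset$; this is where I rely on closedness and mild regularity of $\unsafeset$ (equivalently, continuity of $g$) so that $\gamma(s^*)$ is simultaneously a limit of unsafe points and of safe points. The remaining ingredients, namely existence of the first crossing index and the displacement bound $\Vert x_{t+1}-x_t\Vert \le \umax$ coming from dynamic feasibility, are routine.
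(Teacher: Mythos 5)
Your proof is correct and uses essentially the same idea as the paper's: a dynamically feasible trajectory with safe endpoints must enter $\unsafeset$, and because each step moves the state by at most $\umax$, the point of entry cannot be deeper than $\umax$ from $\partial\unsafeset$, hence lies in $\unsafeset_{\Delta x}$. The paper phrases this as a contradiction (reaching a point of $\unsafeset\setminus\unsafeset_{\Delta x}$ without ever visiting the shell would force a step larger than $\umax$), whereas you argue directly on the first crossing and make the topological step — that the segment from $x_t$ to $x_{t+1}$ meets $\partial\unsafeset$ within distance $\umax$ of $x_{t+1}$ — explicit, which the paper leaves informal; the substance is the same.
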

\begin{proof}
	For each unsafe trajectory $\traj_j$ with start and goal states in the safe set, there exists $x \in \traj_j, x\in \unsafeset$. Further, if there exists $x \in \traj_j \in (\unsafeset\setminus \unsafeset_{\Delta x})$, then there also exists $x \in \traj_j \in \unsafeset_{\Delta x}$. For contradiction, suppose there exists a time $\hat t \in \{1, \ldots, T_j\}$ for which $\traj_j(\hat t) \in (\unsafeset\setminus \unsafeset_{\Delta x})$ and $\nexists t \in \{1, \ldots, T_j\}$ for which $\traj_j(t) \in \unsafeset_{\Delta x}$. But this implies $\exists t < \hat t, \Vert \traj(t) - \traj(t+1) \Vert > \Delta x$ or $\exists t > \hat t, \Vert \traj(t) - \traj(t-1) \Vert > \Delta x$, i.e. to skip deeper than $\Delta x$ into the unsafe set without first entering the $\Delta x$ shell, the state must have changed by more than $\Delta x$ in a single time-step. Contradiction. An analogous argument holds for the continuous-time case. 
\end{proof}

The following result states that in discrete time, the learnable set of unsafe states $\guarunsafe^*$ is contained by the set of states which must be implied unsafe by setting $\unsafeset_{\Delta x}$ as unsafe. Furthermore, in continuous time, the same holds, except the $\unsafeset_{\Delta x}$ is replaced by the boundary of the unsafe set, $\partial \unsafeset$.

\begin{theorem}[Discrete time learnability for parametric constraints]\label{thm:learnability_dt_parametric}
	For trajectories generated by discrete time systems, $\guarunsafe \subseteq \guarunsafe^* \subseteq I(\feas_{\Delta x})$, where
	\begin{align*}
		\feas_{\Delta x} = \{\theta\ |\ &\forall i \in \{1, \ldots, \numsafe\},\quad \forall x \in \traj_i^*, g(x, \theta) > 0, \quad \forall x \in \unsafeset_{\Delta x}, g(x, \theta) \le 0\}.
	\end{align*}
\end{theorem}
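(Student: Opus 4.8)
The plan is to establish the two inclusions $\guarunsafe \subseteq \guarunsafe^*$ and $\guarunsafe^* \subseteq I(\feas_{\Delta x})$ separately, in both cases reducing everything to the two lemmas already proved: Lemma \ref{lem:contain} (antitonicity of $I(\cdot)$ under set inclusion) and Lemma \ref{lem:atleastone} (every unsafe trajectory meets the shell $\unsafeset_{\Delta x}$). The starting observation is that, comparing \eqref{eq:guarunsafe} with the definition of the implied unsafe set, $\guarunsafe = I(\feas)$ for the given data, so all three objects in the theorem are instances of the operator $I(\cdot)$ applied to different parameter sets; the whole argument is then a chain of inclusions between those parameter sets.

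For $\guarunsafe \subseteq \guarunsafe^*$ I would first note that the true parameter $\theta^*$ lies in $\feas$, since every demonstration state satisfies $g(\state,\theta^*)>0$ and, by the construction of Section \ref{sec:method_sampling}, every sampled lower-cost trajectory contains a state with $g(\state,\theta^*)\le 0$. Hence for any $\state \in \guarunsafe = I(\feas)$ we have $g(\state,\theta^*)\le 0$, i.e. $\state \in \unsafeset$, so $\state$ is an eligible candidate in the learnability definition; moreover $\state$ is witnessed learnable by the very dataset defining $\feas$ (whose unsafe trajectories are sampled by hit-and-run, as required). Therefore $\state \in \guarunsafe^*$, and the first inclusion follows essentially by unwinding definitions.

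For $\guarunsafe^* \subseteq I(\feas_{\Delta x})$ I would exploit the union structure of $\guarunsafe^*$. Fix any learnable state $\state^*$; by definition there is a dataset $D$ (demonstrations $\{\traj_i^*\}$ and unsafe trajectories $\{\traj_j\}$) whose feasible set $\feas(D)$ satisfies $\state^* \in I(\feas(D))$. The key claim is $\feas_{\Delta x} \subseteq \feas(D)$. To see this, take any $\theta \in \feas_{\Delta x}$: it marks the whole shell $\unsafeset_{\Delta x}$ unsafe, and by Lemma \ref{lem:atleastone} each $\traj_j$ contains a state in $\unsafeset_{\Delta x}$, so $\theta$ automatically satisfies the weaker ``at least one unsafe state per trajectory'' requirement \eqref{subeq:unsafe}; for a fixed demonstration set the safe constraints of $\feas_{\Delta x}$ and $\feas(D)$ coincide, so $\theta$ satisfies all of $\feas(D)$. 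Then Lemma \ref{lem:contain} turns $\feas_{\Delta x}\subseteq\feas(D)$ into $I(\feas(D)) \subseteq I(\feas_{\Delta x})$, giving $\state^* \in I(\feas_{\Delta x})$; taking the union over all learnable states yields the claim.

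The main obstacle is the bookkeeping around the demonstration set over which $\feas_{\Delta x}$ is defined. Since $\guarunsafe^*$ ranges over all datasets — including datasets with different, possibly infinitely many, demonstrations — the safe constraints of two datasets need not coincide, so the step ``the safe constraints coincide'' must be handled by letting the safe constraints of $\feas_{\Delta x}$ range over all of $\safeset$: any demonstration state is a safe state, so enlarging the safe constraints only shrinks $\feas_{\Delta x}$ and still preserves $\feas_{\Delta x}\subseteq\feas(D)$ for every $D$. I would also make explicit that Lemma \ref{lem:atleastone} applies because each sampled unsafe trajectory inherits safe start/goal states from its parent demonstration, which is exactly that lemma's hypothesis; this is the only place the discrete-time $\Delta x$ step bound enters, and it is precisely what pins the learnable set to the $\Delta x$-shell $\unsafeset_{\Delta x}$ rather than to all of $\unsafeset$.
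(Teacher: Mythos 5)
Your proof is correct and follows essentially the same route as the paper's: identify $\guarunsafe$ with $I(\feas)$, use Lemma \ref{lem:atleastone} to show $\feas_{\Delta x} \subseteq \feas$, flip the inclusion with Lemma \ref{lem:contain}, and take the union over datasets. You are in fact more careful than the paper on two points it leaves implicit --- justifying $\guarunsafe \subseteq \guarunsafe^*$ via feasibility of the true parameter (needed since learnability is only defined for states in $\unsafeset$), and handling the dataset-dependence of the safe constraints in $\feas_{\Delta x}$ by enlarging them to range over all of $\safeset$.
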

\begin{proof}
	Recall that $\guarunsafe \doteq \bigcap_{\theta \in \feas} \{ \state\ |\ g(\state, \theta) \le 0 \}$, where as previously defined, $\feas$ is the feasible set of Problem \ref{prob:parametric_feasibility_program}.
	We can then show that $\feas_{\Delta x} \subseteq \feas$, since enforcing that $g(x, \theta) \le 0$ for all $x \in \unsafeset_{\Delta x}$ implies that there exists $x \in \traj_j$, for all $j \in \{1, \ldots, \numunsafe\}$ such that $g(x, \theta) \le 0$, via Lemma \ref{lem:atleastone}. Then, via Lemma \ref{lem:contain}, $\guarunsafe = I(\feas) \subseteq I(\feas_{\Delta x})$. As this holds for any arbitrary set of trajectories, $\guarunsafe^* \subseteq I(\feas_{\Delta x})$ as well, and $\guarunsafe \subseteq \guarunsafe^*$.
\end{proof}

\begin{corollary}[Continuous-time learnability for parametric constraints]
	For trajectories generated by continuous time systems, $\guarunsafe \subseteq \guarunsafe^* \subseteq I(\feas_{\partial \unsafeset})$, where
	\begin{align*}
		\feas_{\partial \unsafeset} = \{\theta\ |\ &\forall x \in \traj_i^*,\quad \forall i \in \{1, \ldots, \numsafe\}, g(x, \theta) > 0, \quad \forall x \in \partial\unsafeset, g(x, \theta) \le 0\}.
	\end{align*}
\end{corollary}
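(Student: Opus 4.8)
The plan is to mirror the discrete-time argument of Theorem~\ref{thm:learnability_dt_parametric} essentially verbatim, performing the single substitution of the $\Delta x$-shell $\unsafeset_{\Delta x}$ by the boundary $\partial\unsafeset$. The intuition for this substitution is that as the per-step displacement bound vanishes (the continuous-time limit), the shell $\unsafeset_{\Delta x}$ collapses onto $\partial\unsafeset$, so the boundary plays the role the shell played in discrete time. The only genuinely new ingredient is a continuous-time replacement for Lemma~\ref{lem:atleastone}: first I would establish that \emph{every} unsafe trajectory $\traj_j$ whose start and goal lie in the (open) safe set $\safeset$ contains at least one state on the boundary $\partial\unsafeset$.

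To prove this boundary-crossing claim I would apply the intermediate value theorem to the map $t \mapsto \sd(\traj_j(t), \unsafeset)$ along the trajectory. Since $\traj_j$ is a continuous curve and $\sd(\cdot, \unsafeset)$ is continuous (indeed $1$-Lipschitz), the composition is continuous in $t$. At the start the state is safe, so the signed distance is strictly positive; because $\traj_j$ is an \emph{unsafe} trajectory it has at least one state in $\unsafeset$, where the signed distance is $\le 0$. By the intermediate value theorem there is a time $t^\star$ with $\sd(\traj_j(t^\star), \unsafeset) = 0$, i.e. $\traj_j(t^\star) \in \partial\unsafeset$. This is precisely the continuous-time analogue anticipated at the end of the proof of Lemma~\ref{lem:atleastone}, and it replaces the finite-step-size argument used there for the shell.

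With this in hand the remaining steps copy the discrete-time proof. First I would show $\feas_{\partial\unsafeset} \subseteq \feas$: any $\theta \in \feas_{\partial\unsafeset}$ satisfies the identical safe-demonstration constraints, and since it marks \emph{all} of $\partial\unsafeset$ unsafe while each unsafe trajectory passes through $\partial\unsafeset$ by the claim above, it also satisfies the per-trajectory existence constraint $\exists\, \state \in \traj_j,\ g(\state, \theta) \le 0$, whence $\theta \in \feas$. Applying Lemma~\ref{lem:contain} with $\mathcal{B} = \feas_{\partial\unsafeset}$ and $\mathcal{\hat B} = \feas$ then yields $\guarunsafe = I(\feas) \subseteq I(\feas_{\partial\unsafeset})$. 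Since this per-dataset inclusion holds for any arbitrary set of demonstrations and sampled unsafe trajectories, $\guarunsafe^* \subseteq I(\feas_{\partial\unsafeset})$ as well, and $\guarunsafe \subseteq \guarunsafe^*$ is immediate from the definition of learnability (a state in $I(\feas)$ for a given dataset is learnable by that very dataset).

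I expect the main obstacle to be the rigorous justification of the boundary-crossing claim under minimal regularity on $\unsafeset$: the intermediate-value argument is clean provided $\partial\unsafeset$ genuinely separates the safe start from the unsafe interior (here guaranteed since $\unsafeset = \{g \le 0\}$ is closed and $\safeset = \unsafeset^c$ is open, so any passage from $\sd > 0$ to $\sd \le 0$ crosses the zero level set), and provided one is content to capture grazing/tangential contact as a valid boundary visit. These conditions are mild and implicitly assumed throughout, so the substance of the corollary rests entirely on the continuity/IVT step, with the set-algebra of the final three steps being routine once it is in place.
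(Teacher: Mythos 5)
Your proposal is correct and follows essentially the same route as the paper, which simply observes that $\unsafeset_{\Delta x} \rightarrow \partial\unsafeset$ as $\Delta x \rightarrow 0$ and re-applies the logic of Theorem \ref{thm:learnability_dt_parametric}. Your explicit intermediate-value-theorem argument for the boundary-crossing claim fills in the continuous-time analogue of Lemma \ref{lem:atleastone} that the paper leaves as "an analogous argument," so if anything your write-up is more complete than the paper's.
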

\begin{proof}
	Since going from discrete time to continuous time implies $\Delta x \rightarrow 0$, $\unsafeset_{\Delta x} \rightarrow \partial \unsafeset$. Then, the logic from the proof of Theorem \ref{thm:learnability_dt_parametric} can be similarly applied to show the result.
\end{proof}

\subsection{Conservativeness: Parametric}\label{sec:app_conservativeness}

We write conditions for conservative recovery of the unsafe set and safe set when solving Problems \ref{prob:parametric_feasibility_program} and \ref{prob:parametric_polytope_program} for discrete time and continuous time systems.

\begin{theorem}[Conservativeness: Known parameterization (Theorem \ref{thm:knownconservative} in the main body) ]\label{thm:app_knownconservative}
	Suppose the parameterization $g(\state, \theta)$ is known exactly. Then, for a discrete-time system, extracting $\guarunsafe$ and $\guarsafe$ (as defined in \eqref{eq:guarunsafe} and \eqref{eq:guarsafe}, respectively) from the feasible set of Problem \ref{prob:parametric_feasibility_program} returns $\guarunsafe \subseteq \unsafeset$ and $\guarsafe \subseteq \safeset$. Further, if the known parameterization is $H(\theta) \state_i \le h(\theta)$ and $M$ in Problem \ref{prob:parametric_polytope_program} is chosen to be greater than $$\max\Big(\max_{x_i \in \traj_s} \max_\theta \max_j(H(\theta) x_i - h(\theta))_j, \max_{x_i \in \traj_{\neg s}} \max_\theta \max_j (H(\theta) x_i - h(\theta))_j\Big),$$ then extracting $\guarunsafe$ and $\guarsafe$ from the feasible set of Problem \ref{prob:parametric_polytope_program} recovers $\guarunsafe \subseteq \unsafeset$ and $\guarsafe \subseteq \safeset$.
\end{theorem}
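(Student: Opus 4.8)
The plan is to observe that the true parameter generating the constraint lies in the feasible set $\feas$, and then to exploit the fact that $\guarunsafe$ and $\guarsafe$ are intersections over \emph{all} feasible parameters. Because the parameterization is known exactly, there is a true parameter $\theta^* \in \Theta$ with $\unsafeset = \{\cstate \mid g(\cstate,\theta^*) \le 0\}$ and $\safeset = \{\cstate \mid g(\cstate,\theta^*) > 0\}$. First I would check that $\theta^* \in \feas$. Every safe demonstration $\traj_{s_j}^*$ is genuinely safe, so each of its constraint states $\cstate_i \in \phi(\traj_{s_j}^*)$ satisfies $g(\cstate_i,\theta^*) > 0$; this is exactly constraint \eqref{subeq:safe} at $\theta^*$. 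For the unsafe side, I would argue that each sampled lower-cost trajectory $\traj_{\neg s_k}$ must contain at least one genuinely unsafe state: it obeys all known constraints ($\bar\safeset$ and $\safeset_\task$) and has strictly lower cost than the optimal safe demonstration, so were it to also satisfy the unknown constraint it would be a feasible trajectory of strictly smaller cost, contradicting optimality; hence some $\cstate_i \in \phi(\traj_{\neg s_k})$ has $g(\cstate_i,\theta^*) \le 0$, which is constraint \eqref{subeq:unsafe} at $\theta^*$. Thus $\theta^* \in \feas$, and since $\guarunsafe = \bigcap_{\theta \in \feas}\{\cstate \mid g(\cstate,\theta) \le 0\} \subseteq \{\cstate \mid g(\cstate,\theta^*) \le 0\} = \unsafeset$ (keeping only the $\theta^*$ term of the intersection), and identically $\guarsafe \subseteq \safeset$, the general-parameterization claim follows.

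For the polytopic case, the plan is to show that, with the stated choice of $M$, Problem \ref{prob:parametric_polytope_program} is an exact big-M encoding of Problem \ref{prob:parametric_feasibility_program} specialized to $g(\cstate,\theta) \le 0 \Leftrightarrow H(\theta)\cstate \le h(\theta)$, so that the $\theta$-projection of its feasible set coincides with $\feas$ and in particular still contains $\theta^*$. The verification is that the big-M inequalities behave as intended: when a binary is active ($(b_{s_j}^i)_\ell = 1$ or $b_{\neg s_k}^i = 1$) the associated row reduces to the genuine inequality $(H(\theta)\cstate_i)_\ell > h(\theta)_\ell$ or $H(\theta)\cstate_i \le h(\theta)$, while when it is inactive the $M(1-b)$ term must be large enough to make the row vacuous for every data point and every admissible $\theta$. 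Choosing $M$ above the maximum magnitude of $(H(\theta)\cstate_i - h(\theta))_j$ over all rows $j$, all safe/unsafe constraint states, and all $\theta$ -- as in the theorem statement -- secures this non-bindingness. The counting constraints $\sum_\ell (b_{s_j}^i)_\ell \ge 1$ and $\sum_i b_{\neg s_k}^i \ge 1$ then encode respectively ``at least one half-space is violated'' (the safe point lies outside the polytope) and ``at least one trajectory state lies inside the polytope'' (the trajectory is unsafe), matching \eqref{subeq:safe} and \eqref{subeq:unsafe}. Given this equivalence, $\theta^*$ together with binaries that select one violated row per safe state and one unsafe state per unsafe trajectory is feasible for Problem \ref{prob:parametric_polytope_program}, so the intersection argument of the previous paragraph applies verbatim to give $\guarunsafe \subseteq \unsafeset$ and $\guarsafe \subseteq \safeset$.

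I expect the main obstacle to be the polytopic part: carefully confirming that the single stated $M$ simultaneously relaxes every inactive inequality for all data points and all admissible $\theta$, while tracking the sign of each row so that both the strict ``$>$'' safe constraints and the ``$\le$'' unsafe constraints are correctly rendered vacuous. By contrast the general-parameterization part is comparatively routine once the genuine unsafeness of the sampled trajectories is justified from demonstration optimality -- indeed the whole argument hinges on the single observation that $\theta^* \in \feas$.
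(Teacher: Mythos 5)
Your proposal is correct and follows essentially the same route as the paper: the entire argument rests on showing the true parameter $\theta_{\unsafeset}$ lies in $\feas$ (safe demonstrations satisfy \eqref{subeq:safe}, and each sampled lower-cost trajectory must contain a truly unsafe state so \eqref{subeq:unsafe} holds), after which $\guarunsafe$ and $\guarsafe$, being intersections over all of $\feas$, are contained in the sets defined by $\theta_{\unsafeset}$; the big-M condition is likewise handled by verifying the inactive rows become vacuous so Problem \ref{prob:parametric_polytope_program} exactly encodes Problem \ref{prob:parametric_feasibility_program}. The only differences are cosmetic — you argue containment directly rather than by contradiction, and you spell out the optimality argument for why sampled trajectories contain a truly unsafe state, which the paper merely asserts.
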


\begin{proof}
	
	We first prove that $\guarunsafe \subseteq \unsafeset$. Consider first the case of Problem \ref{prob:parametric_feasibility_program}, or equivalently the case of Problem \ref{prob:parametric_polytope_program} where $M = \infty$ (in this case, Problem \ref{prob:parametric_polytope_program} exactly enforces that at least one state in each unsafe trajectory is unsafe and all states on demonstrations are safe).
	
	Suppose for contradiction that there exists some $\state \in \guarunsafe, \state \notin \unsafeset$. By definition of $\guarunsafe$, $g(\state, \theta) \le 0$, for all $\theta \in \feas$, where $\feas$ is the feasible set of parameters $\theta$ in Problem \ref{prob:parametric_feasibility_program}. However, as $\state \notin \unsafeset$, but for all $\theta \in \feas, g(x, \theta) \le 0$ we know that $\theta_\unsafeset \notin \feas$, where $\theta_\unsafeset$ is the parameter associated with the true unsafe set $\unsafeset$. However, $\feas$ will always contain $\theta_\unsafeset$, since:
	\begin{itemize}[leftmargin=*]
		\item $\theta_\unsafeset$ satisfies $g(\state, \theta_\unsafeset) >0$ for all $\state$ in safe demonstrations, since all demonstrations are safe with respect to the true $\theta_\unsafeset$.
		\item For each trajectory $\traj_{\neg s}$ sampled using the hit-and-run procedure in Section \ref{sec:method_sampling}, there exists $\state \in \traj_{\neg s}$ such that $g(\state, \theta_\unsafeset) \le 0$.
	\end{itemize}
	We come to a contradiction, and hence for Problem \ref{prob:parametric_feasibility_program} and for Problem \ref{prob:parametric_polytope_program} where $M=\infty$, $\guarunsafe \subseteq \unsafeset$.
	
	Now, we consider the conditions on $M$ such that choosing $M \ge \textrm{const}$ or $M = \infty$ causes no changes in the solution of Problem \ref{prob:parametric_polytope_program}. $M$ must be chosen such that 1) $H(\theta) x_i - h(\theta) > -M \mathbf{1} \Leftrightarrow H(\theta) x_i - h(\theta) > -\infty \mathbf{1}$, for all safe states $x_i \in \traj_s$, and 2) $H(\theta) x_i - h(\theta) \le M\mathbf{1} \Leftrightarrow H(\theta) x_i - h(\theta) \le \infty\mathbf{1}$ for all states $\state_i$ on unsafe trajectories $\traj_{\neg s}$. Condition 1 is met if $-M < \min_{x_i \in \traj_s} \min_\theta \min_j(H(\theta) x_i - h(\theta))_j$, where $v_j$ denotes the $j$-th element of vector $v$; denote as $M_1$ an $M$ which satisfies this inequality. Condition 2 is met if $M \ge \max_{x_i \in \traj_{\neg s}} \max_\theta \max_j (H(\theta) x_i - h(\theta))_j$; denote as $M_2$ an $M$ which satisfies this inequality. Then, $M$ should be chosen to satisfy $M > \max(M_1, M_2)$.
	
	The proof that $\guarsafe \subseteq \safeset$ is analogous. If there exists $\state \in \guarsafe, \state \notin \safeset$, $g(\state, \theta) > 0$, for all $\theta \in \feas$, then $\theta_\unsafeset \notin \feas$. We follow the same reasoning from before to show that $\theta_\unsafeset \in \feas$ for $M = \infty$. Now, provided the condition on $M$ holds, we reach a contradiction.
\end{proof}

\begin{rem}
	A simple corollary from Theorem \ref{thm:app_knownconservative} is that by solving Problem \ref{prob:parametric_volume_program} repeatedly for different query centers $\state_\textrm{query}$ for a discrete-time system and unioning over the resulting volumes will also provide conservative estimates of $\guarsafe$ and $\guarunsafe$. Further, if the assumption on $M$ holds, then the volume extraction analogue of Problem \ref{prob:parametric_polytope_program} will also return conservative estimates of $\guarsafe$ and $\guarunsafe$.
\end{rem}

As discussed in \cite{extended_version}, with continuous-time system dynamics, assigning unsafeness in lower-cost trajectories difficult since there are an infinite number of states on the continuous trajectory. To ameliorate this, as in \cite{extended_version}, we time-discretize the sampled lower-cost trajectories and feed the resulting discrete-time trajectories into Problems \ref{prob:parametric_feasibility_program} and \ref{prob:parametric_polytope_program}. This can potentially cause a mild overapproximation of the unsafe set, which we quantify after introducing some notation.

\begin{definition}[Normal vectors]
	Denote the outward-pointing normal vector at a point $p\in\partial\unsafeset$ as $\hat n(p)$. Furthermore, at non-differentiable points on $\partial \unsafeset$, $\hat{n}(p)$ is replaced by the set of normal vectors for the sub-gradient of the Lipschitz function describing $\partial\unsafeset$ at that point (\cite{thickness}).
\end{definition}

\begin{definition}[$\gamma$-offset padding]\label{def:app_offset}
	Define the $\gamma$-offset padding $\partial \unsafeset_{\gamma}$ as:
		$\partial \unsafeset_{\gamma} = \{ x \in \statespace \ | \ x = y + d \hat n(y), d\in [0, \gamma], y \in \partial \unsafeset \}$.
\end{definition}

\begin{definition}[$\gamma$-padded set]\label{def:app_buffered_set}
	We define the $\gamma$-padded set of the unsafe set $\unsafeset$, $\unsafeset(\gamma)$, as the union of the $\gamma$-offset padding and $\unsafeset$: $\unsafeset(\gamma) \doteq \partial \unsafeset_\gamma \cup \unsafeset$.
\end{definition}

\begin{definition}[Maximum distance on trajectories]\label{def:maxdist}
	Denote $D_\traj([a, b]) \doteq \sup_{t_1 \in [a,b], t_2 \in [t_1,b]} \Vert \traj(t_1) - \traj(t_2) \Vert_2$, for some trajectory $\traj$. Denote $D^* \doteq \max_{i \in \{1, \ldots, \numunsafe\}} D_{\traj_i}^*([a_i, b_i])$. In words, $D_\traj([a, b])$ is the maximum distance between any two points on trajectory $\traj$ from time $a$ to time $b$, and $D^*$ takes the maximum distance over all $\numunsafe$ trajectories.
\end{definition}

\begin{lemma}[Maximum distance]\label{lem:app_maxdist}
	Consider a continuous time trajectory $\traj:[0,T]\rightarrow\statespace$. Suppose it is known that in some time interval $[a, b], a \le b, a, b \in [0, T]$, $\traj$ is unsafe; denote this sub-segment as $\traj([a, b])$. 
	Consider any $t \in [a, b]$. Then, the signed distance from $\traj(t)$ to the unsafe set, $\sd(\traj(t), \unsafeset)$, is bounded by $D_\traj([a, b]) \doteq \sup_{t_1 \in [a,b], t_2 \in [t_1,b]} \Vert \traj(t_1) - \traj(t_2) \Vert_2$.
\end{lemma}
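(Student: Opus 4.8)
The plan is to show that the signed distance $\sd(\traj(t), \unsafeset)$ cannot exceed the maximum spread $D_\traj([a,b])$ of the sub-segment by exhibiting a single point on the unsafe set within that distance of $\traj(t)$. The key observation is that we are given $\traj([a,b])$ is unsafe, which I interpret (consistently with how the surrounding analysis uses unsafe sub-segments) as meaning the endpoints $\traj(a)$ and $\traj(b)$ lie on or inside $\unsafeset$; in particular there is at least one point of the sub-segment belonging to $\unsafeset$. Recall from the definition of signed distance that for $p \in \unsafeset^c$, $\sd(p, \unsafeset) = \inf_{y \in \partial \unsafeset} \Vert p - y \Vert$, and for $p \in \unsafeset$ the signed distance is non-positive. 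Since we want an upper bound on $\sd(\traj(t), \unsafeset)$, the only interesting case is $\traj(t) \in \unsafeset^c$, where the signed distance is a genuine infimum of distances to the boundary.

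First I would fix an arbitrary $t \in [a,b]$ and dispose of the easy case: if $\traj(t) \in \unsafeset$, then $\sd(\traj(t), \unsafeset) \le 0 \le D_\traj([a,b])$ and we are done, since $D_\traj([a,b]) \ge 0$ by definition. So assume $\traj(t) \in \unsafeset^c$. Next I would locate a witness point of the unsafe set on the sub-segment. Because $\traj([a,b])$ is unsafe there exists some $t^* \in [a,b]$ with $\traj(t^*) \in \unsafeset$. Now the signed distance to $\unsafeset$ is an infimum over boundary points, and since $\traj(t)$ is outside while $\traj(t^*)$ is inside $\unsafeset$, the straight segment joining $\traj(t)$ to $\traj(t^*)$ must cross $\partial\unsafeset$ at some point $y$ (invoking continuity / the intermediate-value structure of the signed-distance function along this segment). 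This boundary point $y$ satisfies $\Vert \traj(t) - y \Vert \le \Vert \traj(t) - \traj(t^*) \Vert$, so by the definition of signed distance as an infimum over $\partial\unsafeset$,
\[
\sd(\traj(t), \unsafeset) \;=\; \inf_{y' \in \partial\unsafeset} \Vert \traj(t) - y' \Vert \;\le\; \Vert \traj(t) - y \Vert \;\le\; \Vert \traj(t) - \traj(t^*) \Vert.
\]

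Finally I would bound $\Vert \traj(t) - \traj(t^*) \Vert$ by $D_\traj([a,b])$. Both $t$ and $t^*$ lie in $[a,b]$, so the pair $(t,t^*)$ (ordered appropriately as $t_1 \le t_2$) is among those over which the supremum in $D_\traj([a,b]) = \sup_{t_1 \in [a,b],\, t_2 \in [t_1,b]} \Vert \traj(t_1) - \traj(t_2) \Vert_2$ is taken, hence $\Vert \traj(t) - \traj(t^*) \Vert \le D_\traj([a,b])$. Chaining the two inequalities yields $\sd(\traj(t), \unsafeset) \le D_\traj([a,b])$, as claimed.

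The main obstacle I anticipate is the step asserting that the segment from $\traj(t) \in \unsafeset^c$ to $\traj(t^*) \in \unsafeset$ meets $\partial\unsafeset$, and more fundamentally pinning down exactly what ``$\traj([a,b])$ is unsafe'' guarantees about membership of the endpoints in $\unsafeset$. If ``unsafe'' only guarantees that the segment violates the constraint somewhere (i.e. intersects $\unsafeset$) rather than giving a specific unsafe point, the argument still goes through via the witness $t^*$; the crossing claim then rests on the continuity of $t' \mapsto \traj(t')$ together with the sign change of the signed-distance function, which is standard but should be stated carefully (and handles non-differentiable boundaries through the sub-gradient normal-vector convention already introduced). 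Everything else is a routine application of the infimum definition of signed distance.
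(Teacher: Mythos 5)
Your proposal is correct and follows essentially the same route as the paper: pick a witness time $t^*$ (the paper's $\tilde t$) with $\traj(t^*) \in \unsafeset$, bound the signed distance from $\traj(t)$ to $\unsafeset$ by $\Vert \traj(t) - \traj(t^*) \Vert$, and bound that by the supremum defining $D_\traj([a,b])$. Your version is in fact more careful than the paper's one-line argument, since you explicitly handle the case $\traj(t) \in \unsafeset$ and justify the passage from ``distance to a point of $\unsafeset$'' to ``infimum over $\partial\unsafeset$'' via the boundary crossing of the connecting segment.
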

\begin{proof}
	Since there exists $\tilde t \in [a,b]$ such that $\traj(\tilde t) \in \unsafeset$, $\sup_{t\in[a,b]} \sd(\traj(t), \unsafeset) = \sup_{t\in[a, b]} \sd(\traj(t), \traj(\tilde t)) \le \sup_{t_1 \in [a,b], t_2 \in [t_1,b]} \Vert \traj(t_1) - \traj(t_2) \Vert_2$.
\end{proof}

\begin{corollary}\label{thm:app_c2d}
	For a continuous-time system where demonstrations and sampled unsafe trajectories are time-discretized, if $M$ is chosen as in Theorem \ref{thm:app_knownconservative}, $\guarsafe \subseteq \safeset$, where $\safeset$ is the safe set, and $\guarunsafe \subseteq \unsafeset(D^*)$, where $D^*$ is as defined in Definition \ref{def:maxdist}.
\end{corollary}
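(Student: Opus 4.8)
The plan is to prove Corollary~\ref{thm:app_c2d} by reducing the continuous-time case to the discrete-time guarantee of Theorem~\ref{thm:app_knownconservative}, with the only new ingredient being a quantification of the error introduced by time-discretizing the sampled unsafe trajectories. The safe-set claim $\guarsafe \subseteq \safeset$ should carry over essentially verbatim: the safe demonstrations are points that the true parameter $\theta_\unsafeset$ marks safe, and time-discretization only drops states from a safe trajectory — it never introduces a violation of $g(\state,\theta_\unsafeset) > 0$ — so $\theta_\unsafeset$ remains feasible for the discretized program. Thus the contradiction argument from Theorem~\ref{thm:app_knownconservative} applies unchanged to show $\guarsafe \subseteq \safeset$, and the condition on $M$ guarantees that Problem~\ref{prob:parametric_polytope_program} agrees with the $M=\infty$ (Problem~\ref{prob:parametric_feasibility_program}) behavior.

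The substantive part is $\guarunsafe \subseteq \unsafeset(D^*)$. First I would make the discretization explicit: for each continuous unsafe trajectory $\traj_j$, we know it passes through $\unsafeset$ during some interval $[a_j, b_j]$, but after time-discretization the feasibility program only sees finitely many sampled states, none of which is guaranteed to actually lie in $\unsafeset$. The constraint \eqref{subeq:unsafe} nonetheless forces at least one \emph{sampled} state on $\traj_j$ to be marked unsafe by $\theta$. The key observation is that any sampled state $\traj_j(t)$ in the unsafe interval is within distance $D_{\traj_j}([a_j,b_j]) \le D^*$ of a genuinely unsafe point, by Lemma~\ref{lem:app_maxdist}. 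Consequently, the true parameter $\theta_\unsafeset$ is \emph{not} necessarily feasible for the discretized problem (a discretized trajectory may graze the unsafe set without any sampled point landing inside $\unsafeset$), so I cannot directly reuse the ``$\theta_\unsafeset \in \feas$'' contradiction. The fix is to argue about the padded set: the padded parameter, i.e. the parameter $\theta_{\unsafeset(D^*)}$ describing $\unsafeset(D^*)$, \emph{is} feasible, because any sampled state that \eqref{subeq:unsafe} could legitimately require to be unsafe lies within $D^*$ of $\partial\unsafeset$ and hence inside $\unsafeset(D^*)$ by Definitions~\ref{def:app_offset} and~\ref{def:app_buffered_set}.

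Concretely, I would run the contradiction exactly as in Theorem~\ref{thm:app_knownconservative} but with $\unsafeset$ replaced by $\unsafeset(D^*)$. Suppose $\state \in \guarunsafe$ but $\state \notin \unsafeset(D^*)$. Then $g(\state,\theta) \le 0$ for all $\theta \in \feas$, so $\theta_{\unsafeset(D^*)} \notin \feas$. To contradict this I verify the two membership conditions for $\theta_{\unsafeset(D^*)}$: (i) it marks every demonstration state safe, since $\unsafeset(D^*) \supseteq \unsafeset$ is still avoided by the (padded-safe) demonstrations — here I would note that demonstrations stay outside the padded set because the expert trajectories are safe with margin, or more carefully, that the relevant guarantee is stated relative to $\unsafeset(D^*)$; and (ii) for each discretized unsafe trajectory, the sampled unsafe state lies within $D^*$ of $\unsafeset$ and therefore inside $\unsafeset(D^*)$, so $g(\cdot,\theta_{\unsafeset(D^*)}) \le 0$ there. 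This makes $\theta_{\unsafeset(D^*)} \in \feas$, a contradiction, yielding $\guarunsafe \subseteq \unsafeset(D^*)$.

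The main obstacle I anticipate is condition (i): padding the unsafe set outward shrinks the safe set, and I must ensure the safe demonstrations still lie outside $\unsafeset(D^*)$ so that $\theta_{\unsafeset(D^*)}$ genuinely keeps them safe. This should follow because the same $D^*$ that measures how far a discretized unsafe trajectory can drift is also a bound on how far a discretized \emph{safe} trajectory can drift, but one must confirm the padding direction does not engulf demonstration states; handling the normal-vector/subgradient definition of $\partial\unsafeset_\gamma$ at non-smooth boundary points (Definition of normal vectors) is the delicate geometric step, and I would lean on Lemma~\ref{lem:app_maxdist} together with the $M$-condition to close it cleanly.
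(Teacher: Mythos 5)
Your overall strategy differs from the paper's: the paper never constructs a ``padded parameter.'' Its proof of $\guarunsafe \subseteq \unsafeset(D^*)$ argues directly that the only truly-safe states the data can erroneously force to be unsafe are sampled states $\traj_j(t_i)$ lying in the unsafe window $[a_j,b_j]$ of some discretized unsafe trajectory, and Lemma \ref{lem:app_maxdist} bounds the signed distance of any such state to $\unsafeset$ by $D_{\traj_j}([a_j,b_j]) \le D^*$; taking the worst case over all sampled trajectories gives containment in the $D^*$-padded set. Your route --- rerunning the contradiction of Theorem \ref{thm:app_knownconservative} with $\theta_{\unsafeset(D^*)}$ in place of $\theta_\unsafeset$ --- is a plausible idea but has one gap you flag without closing and one you do not mention. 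The unmentioned one: $\theta_{\unsafeset(D^*)}$ must actually be an element of the parameter space $\Theta$, i.e., the padded set of Definitions \ref{def:app_offset}--\ref{def:app_buffered_set} must be representable by the fixed parameterization $g(\cdot,\cdot)$. In general it is not (the Euclidean padding of an axis-aligned box is not an axis-aligned box), so the feasible point you want to exhibit may simply not exist and the contradiction never gets off the ground. The flagged one, your condition (i): demonstrations are only guaranteed to avoid $\unsafeset$, not $\unsafeset(D^*)$, and in the paper's own experiments the demonstrations push up against the constraint boundary with no margin, so $\theta_{\unsafeset(D^*)}$ can violate constraint \eqref{subeq:safe}. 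Your two proposed resolutions (assume the demonstrations are safe ``with margin''; restate the guarantee relative to $\unsafeset(D^*)$) amount, respectively, to an extra unstated assumption and to changing the statement being proved.

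There is also an internal inconsistency in your treatment of $\guarsafe \subseteq \safeset$: you first assert that $\theta_\unsafeset$ ``remains feasible for the discretized program'' (which is exactly what the contradiction for the safe-set half requires), and shortly after correctly observe that $\theta_\unsafeset$ is \emph{not} necessarily feasible, because a discretized unsafe trajectory may have no sampled state inside $\unsafeset$. You cannot use both claims. (The paper's one-line justification of this half is admittedly terse and leans on the same implicit assumption that the discretization is fine enough for constraint \eqref{subeq:unsafe} to be satisfiable by the true parameter, but your writeup makes the tension explicit and then leaves it unresolved.) The cleaner path is the paper's: avoid constructing any specific feasible parameter for the padded set, and instead bound, via Lemma \ref{lem:app_maxdist}, how far from $\unsafeset$ any state can be that the discretized data could possibly force to be unsafe.
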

\begin{proof}
	The reasoning for $\guarsafe \subseteq \safeset$ follows from the proof of $\guarunsafe \subseteq \unsafeset$ in the proof of Theorem \ref{thm:app_knownconservative}.
	
	Now we prove $\guarunsafe \subseteq \unsafeset(D^*)$. Suppose in this case, there exists a state $\state = \traj_j(t_i) \notin \unsafeset$ which is truly safe but lies on a sampled unsafe trajectory $\traj_{j}([a_j, b_j])$, and suppose that $\{t_1, \ldots, t_N\}$ is chosen such that for all $k \in \{1, \ldots, N\}\setminus \{i\}$, $\traj_{j}(t_k)$ belongs to a known safe cell. Then, we may incorrectly learn that $\traj_j(t_i)$ is unsafe, as we force at least one point in the sampled trajectory to be unsafe. Via Lemma \ref{lem:app_maxdist}, we know that $\traj_{j}(t_i)$ is at most $D_{\traj_j}([a_j, b_j])$ signed distance away from $\unsafeset$. Hence, for this trajectory, any learned guaranteed unsafe state must be contained in the $D_{\traj_j}([a_j, b_j])$-padded unsafe set. For this to hold for all unsafe trajectories sampled with the hit-and-run procedure presented in Section \ref{sec:method_sampling}, we must pad the unsafe set by $D^*$. Hence, under this assumption, the algorithm returns a conservative estimate of the $D^*$-padded unsafe set.
\end{proof}

Let's consider the case where the true parameterization is not known and we use the method described in Section \ref{sec:incremental}, where $g_s(\state, \theta)$ is the simple parameterization. We consider the under-parameterized case (Theorem \ref{thm:lowerboundconservative}) and the over-parameterized case (Theorem \ref{thm:upperboundconservative}). In particular, we analyze the case where the true parameterization, the under-parameterization, and the over-parameterization are defined respectively as:

\vspace{-5pt}
\begin{minipage}{.4\linewidth}
	\small\begin{equation}\label{eq:app_trueparam}
	g(\state, \theta) \le 0 \Leftrightarrow \\ \bigvee_{i=1}^{\numbox} \big(g_s(\state, \theta_i) \le 0\big)
\end{equation}
\end{minipage}
\begin{minipage}{.6\linewidth}
	\small\begin{equation}\label{eq:app_underparam}
	g(\state, \theta) \le 0 \Leftrightarrow \bigvee_{i=1}^{\underline N} \big(g_s(\state, \theta_i) \le 0\big), \quad \underline N < \numbox
\end{equation}
\end{minipage}

\vspace{-5pt}
\centerline{\begin{minipage}{.5\linewidth}
	\small\begin{equation}\label{eq:app_overparam}
	g(\state, \theta) \le 0 \Leftrightarrow \bigvee_{i=1}^{\bar N} \big(g_s(\state, \theta_i) \le 0\big), \quad \bar N > \numbox.
\end{equation}
\end{minipage}}

\begin{theorem}[Conservativeness: Over-parameterization (Theorem \ref{thm:upperboundconservative} in the main body) ]\label{thm:app_upperboundconservative}
	Suppose the true parameterization and over-parameterization are defined as in \eqref{eq:app_trueparam} and \eqref{eq:app_overparam}. Then, $\guarunsafe\subseteq \unsafeset$ and $\guarsafe\subseteq \safeset$.
\end{theorem}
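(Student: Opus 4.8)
The plan is to reuse the core argument from the proof of Theorem~\ref{thm:app_knownconservative}: both inclusions $\guarunsafe \subseteq \unsafeset$ and $\guarsafe \subseteq \safeset$ follow immediately once I exhibit a single parameter $\bar\theta \in \feas$, feasible in the over-parameterized version of Problem~\ref{prob:parametric_feasibility_program}, whose induced unsafe set equals the true unsafe set $\unsafeset$ exactly. Indeed, if such a $\bar\theta$ exists, then for any $\state \in \guarunsafe = \bigcap_{\theta\in\feas}\{x\mid g(x,\theta)\le 0\}$ we have $g(\state,\bar\theta)\le 0$, so $\state \in \unsafeset$; and for any $\state\in\guarsafe$ we have $g(\state,\bar\theta)>0$, so $\state\notin\unsafeset$, i.e. $\state\in\safeset$. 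So the whole theorem reduces to constructing one good feasible parameter.

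First I would construct $\bar\theta$. Let $\theta_1,\ldots,\theta_{\numbox}$ be the simple-set parameters realizing the true unsafe set via \eqref{eq:app_trueparam}, so that $\unsafeset = \bigcup_{i=1}^{\numbox}\{x\mid g_s(x,\theta_i)\le 0\}$. Since $\bar N > \numbox$, I would define $\bar\theta = (\theta_1,\ldots,\theta_{\numbox},\theta_{\numbox+1},\ldots,\theta_{\bar N})$ by padding with $\bar N - \numbox$ additional simple sets that duplicate one of the existing $\theta_i$ (or are otherwise degenerate/empty). Because a union is unchanged by repeating its members, the induced over-parameterized set satisfies $\bigcup_{i=1}^{\bar N}\{x\mid g_s(x,\theta_i)\le 0\} = \bigcup_{i=1}^{\numbox}\{x\mid g_s(x,\theta_i)\le 0\} = \unsafeset$. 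This shows that the family of sets representable with $\bar N$ simple sets via \eqref{eq:app_overparam} contains every set representable with $\numbox$ simple sets, so the true unsafe set is exactly representable in the over-parameterization.

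Next I would verify $\bar\theta \in \feas$, which follows exactly as in Theorem~\ref{thm:app_knownconservative} now that the induced set is precisely $\unsafeset$: every safe demonstration state satisfies $g(\cdot,\bar\theta)>0$ because all demonstrations are safe with respect to the true constraint, and each hit-and-run-sampled lower-cost trajectory $\traj_{\neg s}$ contains at least one state with $g(\cdot,\bar\theta)\le 0$ because, being of lower cost than the demonstration, it must violate the true constraint and hence enter $\unsafeset$. Thus both the safe and unsafe constraint families \eqref{subeq:safe}--\eqref{subeq:unsafe} of Problem~\ref{prob:parametric_feasibility_program} are met by $\bar\theta$, giving $\bar\theta\in\feas$ and completing the argument via the first paragraph.

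The only genuinely new ingredient relative to the known-parameterization proof is the representability/padding observation in the second paragraph; everything else is a direct transfer of the ``true parameter is feasible'' argument. I expect the main (and mild) obstacle to be making the padding construction precise for the specific simple parameterization $g_s$ in use, i.e. confirming that $g_s$ admits duplicated or empty/degenerate simple sets so that padding leaves the union unchanged. For the axis-aligned hyper-rectangle parameterization used throughout the experiments this is immediate (repeat any box), so the construction is unproblematic there.
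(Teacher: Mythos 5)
Your proof is correct and takes essentially the same approach as the paper: both arguments rest on the observation that the true unsafe set is exactly representable (and feasible) within the over-parameterization, so that intersecting over all feasible parameters yields conservativeness. The paper phrases this via a feasible-set inclusion $\feas \subseteq \hat\feas$ (padding with \emph{empty} simple sets and invoking Lemma \ref{lem:contain} together with Theorem \ref{thm:app_knownconservative}), whereas you directly exhibit the padded feasible parameter $\bar\theta$ -- using duplicates, which has the minor advantage of not requiring the simple parameterization to admit empty sets -- but the substance is identical.
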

\begin{proof}
	Note that \eqref{eq:app_trueparam} is equivalent to $\Big(\bigvee_{i=1}^{\bar N} \big(g_s(\state, \theta_i) \le 0\big)\Big)$, where $\theta_{\numbox+1}, \ldots, \theta_{\bar N}$ are constrained to satisfy $\{ \state \mid g_s(\state, \theta_i) \le 0\} = \emptyset, i = \numbox+1, \ldots, \bar N$. Thus, the true $\theta$ is equivalent to adding additional constraints on a loosened parameterization (the over-parameterization). Let $\hat\feas$ be the feasible set of Problem \ref{prob:parametric_feasibility_program} with $\theta$ loosened as above, i.e. $\feas = \hat\feas \cap \{\theta\mid \{ \state \mid g_s(\state, \theta_i) \le 0\} = \emptyset, i = \numbox+1, \ldots, \bar N \}$. Via Lemma \ref{lem:contain}, $\feas \subseteq \hat\feas$; thus, $I_{\neg s}(\hat\feas) \subseteq I_{\neg s}(\feas)\subseteq \unsafeset$, where the last set containment follows from Theorem \ref{thm:knownconservative}. Vice versa, $I_{s}(\hat\feas) \subseteq I_{s}(\feas)\subseteq \safeset$, where again the last set containment follows from Theorem \ref{thm:knownconservative}.
\end{proof}

\begin{theorem}[Conservativeness: Under-parameterization (Theorem \ref{thm:lowerboundconservative} in the main body) ]\label{thm:app_lowerboundconservative}
Suppose the true parameterization and under-parameterization are defined as in \eqref{eq:app_trueparam} and \eqref{eq:app_underparam}. Furthermore, assume that we incrementally grow the parameterization as described in Section \ref{sec:incremental}. Then, the following are true:
\begin{enumerate}[leftmargin=*]
	\item $\guarunsafe$ and $\guarsafe$ are not guaranteed to be contained in $\unsafeset$ (unsafe set) and $\safeset$ (safe set), respectively.
	\item Each recovered simple unsafe set $\unsafeset(\theta_i)$, $i=1,\ldots, \underline N$, for any $\theta_1, \ldots, \theta_{\underline N} \in \feas$, touches the true unsafe set (there are no spurious simple unsafe sets): for $i = 1, \ldots, \underline N$, for $\theta_1, \ldots, \theta_{\underline N} \in \feas$, $\unsafeset(\theta_i) \cap \unsafeset \ne \emptyset$ ($\underline N$ is as defined in Section \ref{sec:incremental}).
\end{enumerate}
\end{theorem}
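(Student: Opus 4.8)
The statement splits into a negative result (part 1) and a structural result (part 2), which I would attack with different techniques: a concrete counterexample for part 1, and a contradiction argument built on the minimality of $\underline N$ for part 2.

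For part 1, the plan is to exhibit a single instance in which $\guarunsafe \not\subseteq \unsafeset$. Take $\constraintspace = \reals$ with true unsafe set $\unsafeset = [-2,-1]\cup[1,2]$ (so $\numbox = 2$), simple sets being closed intervals, and under-parameterize with $\underline N = 1$. I would place the safe demonstrations entirely in $(-\infty,-2)\cup(2,\infty)$ and construct two unsafe trajectories whose \emph{only} coverable (non-demonstration) states sit at $-1.5$ and at $1.5$ respectively. Then feasibility of Problem \ref{prob:parametric_feasibility_program} with one interval forces that interval to contain both $-1.5$ and $1.5$, hence all of $[-1.5,1.5]$, while $\underline N = 1$ since one interval already suffices. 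Since every feasible interval contains $[-1.5,1.5]$, so does their intersection, giving $[-1.5,1.5]\subseteq\guarunsafe$; but $(-1,1)\subseteq\safeset$, so $\guarunsafe$ contains genuinely safe states. An analogous construction, in which $\underline N$ simple sets are too few to account for all $\numbox$ true components so that some genuinely unsafe region is classified safe by \emph{every} feasible $\theta$, yields $\guarsafe\not\subseteq\safeset$.

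For part 2, I would argue by contradiction using the defining property of $\underline N$ from Section \ref{sec:incremental}: Problem \ref{prob:parametric_feasibility_program} is feasible with $\underline N$ simple sets but \emph{infeasible} with $\underline N - 1$. Fix any $\theta_1,\dots,\theta_{\underline N}\in\feas$ and suppose some $\unsafeset(\theta_i)\cap\unsafeset = \emptyset$, i.e. $\unsafeset(\theta_i)\subseteq\safeset$. I would then consider the reduced collection $\{\theta_j\}_{j\neq i}$, which uses only $\underline N - 1$ sets. The safe-demonstration constraints \eqref{subeq:safe} are automatically preserved, since removing a simple set only shrinks the learned unsafe union and therefore cannot turn a previously-safe demonstration state unsafe. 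If the unsafe-trajectory constraints \eqref{subeq:unsafe} are also preserved, then $\{\theta_j\}_{j\neq i}$ is feasible with $\underline N - 1$ sets, contradicting the minimality of $\underline N$ and establishing $\unsafeset(\theta_i)\cap\unsafeset\neq\emptyset$.

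The crux, and the step I expect to be the main obstacle, is showing that dropping a purely-safe simple set cannot leave any unsafe trajectory uncovered; equivalently, that no unsafe trajectory is certified unsafe \emph{solely} by states lying in $\unsafeset(\theta_i)\subseteq\safeset$. The natural tool is Lemma \ref{lem:atleastone}: every sampled unsafe trajectory contains a state in $\unsafeset_{\Delta \state}\subseteq\unsafeset$, which, being in $\unsafeset$, lies outside the disjoint set $\unsafeset(\theta_i)$. The delicate part is converting ``has a genuinely unsafe state'' into ``is covered by some $\theta_j$ with $j\neq i$,'' since constraint \eqref{subeq:unsafe} only demands that \emph{some} state be covered and does not force that state to be truly unsafe. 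I would try to close this gap by combining the minimality of $\underline N$ with Lemma \ref{lem:atleastone} to rule out a configuration in which a safe set is the unique unsafeness certificate for a trajectory; if a residual genericity hypothesis on the data proves necessary, I would state it explicitly. Once coverage preservation is secured, the contradiction with minimality is immediate.
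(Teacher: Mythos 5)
Your overall strategy matches the paper's: part 1 by explicit counterexample, part 2 by contradiction with the minimality of $\underline N$. On part 1, your one-dimensional two-component example is isomorphic to the paper's construction (two separated pieces of $\unsafeset$ forcing a single simple set to bridge the safe gap between them), and it correctly yields $\guarunsafe \not\subseteq \unsafeset$. Two caveats: the ``analogous construction'' for $\guarsafe \not\subseteq \safeset$ is not analogous in your specific example --- there, every truly unsafe state is still contained in \emph{some} feasible interval, so $\guarsafe \subseteq \safeset$ actually holds. What you need instead is a demonstration state sitting in the gap between the two unsafe components, so that no feasible interval covering the activated component can reach the unactivated one; then the entire unactivated component lands in $\guarsafe$. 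This is a one-line fix, and the paper is equally terse on this half.

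On part 2 there is a genuine gap, and it is exactly the one you flagged: after deleting a simple set $\unsafeset(\theta_i)$ that is disjoint from $\unsafeset$, you must show the remaining $\underline N - 1$ sets still cover at least one state of every unsafe trajectory, and constraint \eqref{subeq:unsafe} gives you no handle on this because the covered state need not be truly unsafe. Lemma \ref{lem:atleastone} does not close this: it guarantees each unsafe trajectory contains a truly unsafe state, but nothing forces that state (rather than some safe state inside $\unsafeset(\theta_i)$) to be the one that is covered. Concretely, if several sampled unsafe trajectories all pass through a common truly-safe, non-demonstrated point $p$, then a single tiny simple set around $p$ is feasible, $\underline N = 1$ is minimal (zero sets is always infeasible), and the recovered set is disjoint from $\unsafeset$ --- so the coverage-preservation step cannot be established without some additional hypothesis on the data. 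Your proposal is honest that this step is open, so as written it does not prove part 2. For what it is worth, the paper's own proof of part 2 is a one-line assertion that the reduced tuple ``would be a feasible point'' for the $(\underline N - 1)$-set problem, i.e., it silently assumes precisely the step you identify as the crux; you have located a real soft spot in the argument rather than missed an idea the paper supplies.
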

\begin{proof}
\begin{enumerate}[leftmargin=*]
	\item We first formally prove the statement with a counterexample and then follow up with logic related to the proof of Theorem \ref{thm:app_upperboundconservative}.

	Consider the example in Fig. \ref{fig:counterexample}, where the parameterization is chosen as a single axis-aligned box $[I_{2\times 2}, -I_{2\times 2}]^\top \state \le \theta$ but $\unsafeset$ is only representable with at least two boxes. Suppose demonstrations are provided which imply that $(a_l, b_l)$ and $(a_u, b_u)$ are unsafe; then AABB$(\{(a_l, b_l), (a_u, b_u)\}) \not\subseteq \unsafeset$ is implied unsafe.
	\begin{figure}
	\centering
		\includegraphics[width=0.5\linewidth]{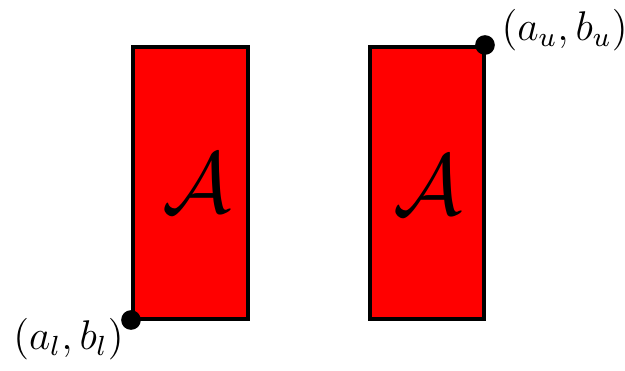}
\caption{\small Counterexample used in the proof of the first statement in Theorem \ref{thm:app_lowerboundconservative}.}
\label{fig:counterexample}
	\end{figure}

	Note that \eqref{eq:app_underparam} is equivalent to $\Big(\bigvee_{i=1}^{\numbox} \big(g_s(\state, \theta_i) \le 0\big)\Big)$, where $\theta_{\underline N+1}, \ldots, \theta_{\numbox}$ are constrained to satisfy $\{ \state \mid g_s(\state, \theta_i) \le 0\} = \emptyset, i = \underline N+1, \ldots, \numbox$. Thus, restricting the parameterization is equivalent to adding additional constraints on the true $\theta$. Let $\hat\feas$ be the feasible set of Problem \ref{prob:parametric_feasibility_program} with $\theta$ restricted as above, i.e. $\hat\feas = \feas \cap \{\theta\mid \{ \state \mid g_s(\state, \theta_i) \le 0\} = \emptyset, i = \underline N+1, \ldots, \numbox \}$. Via Lemma \ref{lem:contain}, $\hat\feas \subseteq \feas$; thus, $I_{\neg s}(\feas) \subseteq I_{\neg s}(\hat\feas)$. Since $I_{\neg s}(\feas)$ can equal $\unsafeset$, potentially $\guarunsafe = I_{\neg s}(\hat\feas) \cap \safeset \ne \emptyset$. Vice versa, $I_{s}(\feas) \subseteq I_{s}(\hat\feas)$, and since $I_{s}(\feas)$ can equal $\safeset$, potentially $\guarsafe = I_{s}(\hat\feas) \cap \safeset \ne \emptyset$.
	\item Assume, by contradiction, that Problem \ref{prob:parametric_feasibility_program} outputs a simple unsafe set $\unsafeset(\theta_i), i \in \{1, \ldots, \underline N\}$, which does not touch the true unsafe set: $\exists i \in \{1, \ldots, \underline N\}, \unsafeset(\theta_i) \cap \unsafeset(\theta^*) = \emptyset$. Then, $\theta_j, j\in \{1, \ldots, \underline N\} \setminus \{i\}$ would be a feasible point for Problem \ref{prob:parametric_feasibility_program} with a parametrization that contains only $\underline{N}-1$ simple sets. However, we know Problem \ref{prob:parametric_feasibility_program} with $\underline{N}-1$ simple sets is infeasible. Contradiction.
\end{enumerate}
\end{proof}

\section{Extra numerical examples}\label{sec:app_results}

\subsection{U-shape (random demonstrations)} 

In this example, we show what the performance of our method looks like with random demonstrations on the U-shape example. On the left of Fig. \ref{fig:ushape_rand}, we show that our coverage grows more slowly than for the case where demonstrations are chosen for their informativeness; furthermore, coverage for the safe set is higher and coverage for the unsafe set is lower in the random demonstration case. This is because by using random demonstrations, we cover a good deal of $\safeset$, so $\guarsafe$ becomes larger; on the other hand, many of these safe demonstrations may not come in contact with the constraint, so there are relatively few unsafe trajectories that can be sampled, so $\guarunsafe$ is not as large. In the center of Fig. \ref{fig:ushape_rand}, we show that the accuracy of our method doesn't change much, though the relative performance of the NN gets worse for classifying safe states; this is because the accuracy for the NN is now being evaluated on a larger region since $\guarsafe$ is larger due to more demonstrations. As in previous examples, the NN error bars are generated by training the NN ten times with initializations using different random seeds. On the right of Fig. \ref{fig:ushape_rand}, we display a feasible $\unsafeset(\theta)$ recovered by solving a multi-box variant of Problem \ref{prob:parametric_polytope_program}. With more demonstrations, the gap between $\unsafeset(\theta)$ and the true unsafe set $\unsafeset$ will continue to shrink.

The main takeaways from this experiment are: 1) when demonstrations are not informative (in the sense that they do not interact with the constraint), it can take many demonstrations to learn the unsafe set (this holds for any constraint recovery method), and 2) our accuracy remains just as high as for the case with specifically chosen demonstrations and is not much affected by the coverage.

\begin{figure}
	\centering
		\includegraphics[width=\linewidth]{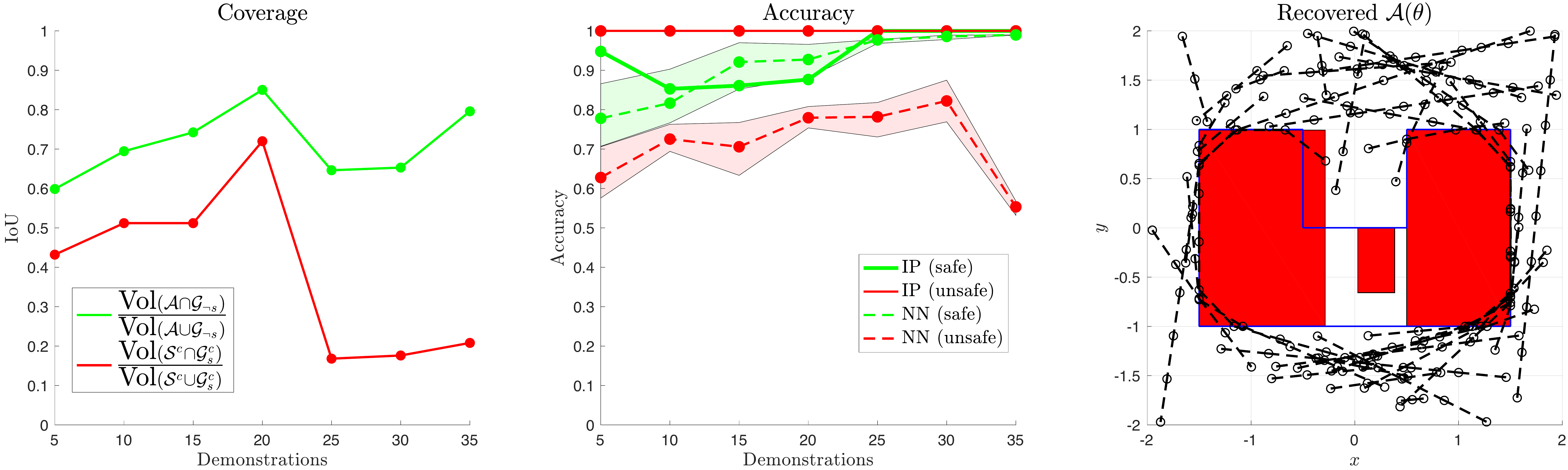}
\caption{\small U-shape performance with random demonstrations. \textbf{Left}: Coverage of $\unsafeset$ and $\safeset$. \textbf{Center}: Classification accuracy. \textbf{Right}: A recovered feasible $\unsafeset(\theta)$, overlaid with demonstrations, and the true unsafe set $\unsafeset$ is outlined in blue. }
\label{fig:ushape_rand}
	\end{figure}

\section{Experimental details}\label{sec:app_experimental}

For all neural network baseline results in every experiment, the network is trained with weights initialized using ten different random seeds, and the resulting performance range (displayed as a shaded region) and average performance over the ten random seeds are plotted in the figures.

\subsection{Unknown parameterizations}

We emphasize that for all examples with unknown parameterization, by following the incremental procedure detailed in Section \ref{sec:incremental}, we are finding the minimum number of boxes required to represent the data; in other words, we are always operating with the minimal feasible parameterization.

\textbf{U-shape and infinite boxes}:

\begin{itemize}[leftmargin=*]
	\item For both experiments, the system dynamics are $\state_{t+1} \doteq [\chi_{t+1}, y_{t+1}]^\top = [\chi_t, y_t]^\top + [u_t^\chi, u_t^y]^\top$. The U-shape experiment uses control constraints $\Vert[u_t^\chi, u_t^y]\Vert_2 \le 0.5$, while the infinite-box experiment uses control constraints $\Vert[u_t^\chi, u_t^y]\Vert_2 \le 1$.
	\item For both experiments, the cost function is $c(\trajx, \traju) = \sum_{i=1}^{T-1} \Vert \state_{t+1} - \state_t\Vert_2^2$.
	\item Since the cost function has optimal substructure, 100000 unsafe trajectories for each sub-trajectory are sampled. The dataset is downsampled to 50 unsafe trajectories for each sub-trajectory, which are to be fed into the multi-box variant of Problem \ref{prob:parametric_polytope_program}.
	\item For both experiments, the initial parameter set is restricted to $[-5, -5, -3, -3]^\top \le \theta_i \le [8, 8, 3, 3]^\top$, for each $\theta_i$ (the parameter for box $i$). For the infinite-box experiment, each box is restricted to be at least $1.25 \times 1.25$ in width/height.
	\item Sampling time is around 15 seconds per demonstration (for the U-shape experiment) and 10 seconds per demonstration (for the infinite-box experiment). Computation time for solving Problem \ref{prob:parametric_polytope_program} is around 40 seconds (for the U-shape experiment) and 15-20 seconds (for the infinite-box experiment).
	\item The same data is used for training the neural network (7800 trajectories total for the U-shape case, 2000 trajectories for the infinite-box case). The neural network architecture used for this example is a fully connected (FC) layer, $2\times 10$ $\rightarrow$ LSTM, $10\times 10 \rightarrow$ FC $10\times 1$ (the recurrent layer is used since we have variable length trajectories as training input). The network is trained using Adam.
\end{itemize}

\textbf{U-shape with random demonstrations}:

\begin{itemize}[leftmargin=*]
	\item The system dynamics are $\state_{t+1} \doteq [\chi_{t+1}, y_{t+1}]^\top = [\chi_t, y_t]^\top + [u_t^\chi, u_t^y]^\top$ with control constraints $\Vert[u_t^\chi, u_t^y]\Vert_2 \le 0.5$.
	\item The cost function is $c(\trajx, \traju) = \sum_{i=1}^{T-1} \Vert \state_{t+1} - \state_t\Vert_2^2$.
	\item Demonstrations are generated for 35 pairs of start/goal states sampled uniformly at random over $(\chi,y) \in [-2, 2] \times [-2, 2]$, rejecting any start/goal states that lie in $\unsafeset$.
	\item Since the cost function has optimal substructure, 10000 unsafe trajectories for each sub-trajectory are sampled. The dataset is downsampled to 25 unsafe trajectories for each sub-trajectory, which are to be fed into the multi-box variant of Problem \ref{prob:parametric_polytope_program}.
	\item The initial parameter set is restricted to $[-5, -5, -3, -3]^\top \le \theta_i \le [8, 8, 3, 3]^\top$, for each $\theta_i$ (the parameter for box $i$).
	\item Sampling time is around 2 minutes total. Computation time for solving the multi-box variant of Problem \ref{prob:parametric_polytope_program} is around 90 seconds.
	\item The same data is used for training the neural network (10100 trajectories total). The neural network architecture used for this example is a fully connected (FC) layer, $2\times 10$ $\rightarrow$ LSTM, $10\times 10 \rightarrow$ FC $10\times 1$. The network is trained using Adam.
\end{itemize}

\subsection{High-dimensional examples}

\textbf{7-DOF arm, optimal/suboptimal demonstrations}

\begin{itemize}[leftmargin=*]
	\item The system dynamics are $\doteq \theta_{t+1}^i = \theta_{t}^i + \control_t^i$, $i=1, \ldots, 7$, with control constraints $-2 \le \control_t^i \le 2$, $i=1, \ldots, 7$, where the state is $\state = [\theta^1, \ldots, \theta^7]$.
	\item The cost function is $c(\trajx, \traju) = \sum_{i=1}^{T-1} \Vert \state_{t+1} - \state_t\Vert_2^2$. Note that the generate demonstrations (displayed in Fig. \ref{fig:arm}) push up against the position constraint, since the trajectory minimizing joint-space path length without the position constraint is an arc that exceeds the bounds of the position constraint; the position constraint ends up increasing the cost by truncating that arc.
	\item The true safe set is $(x, y, z, \alpha, \beta, \gamma) \in [-0.51, 0.51] \times [-0.3, 1.1] \times [-0.51, 0.51] \times [-\pi, \pi] \times [-\pi/120, \pi/120] \times [-\pi/120, \pi/120]$ for the optimal case and the true safe set is $(x, y, z, \alpha, \beta, \gamma) \in [-0.57, 0.47] \times [-0.10, 1.17] \times [-0.56,    0.56]\times [-\pi, \pi]\times [-0.12, 0.12] \times [-0.125, 0.125]$ for the suboptimal case.
	\item Since the cost function has optimal substructure, 250000 unsafe trajectories for each sub-trajectory are sampled. For the suboptimal case, the continuous-time demonstrations are time-discretized down to $T = 10$ time-steps. The dataset is downsampled to 500 unsafe trajectories for each sub-trajectory, which are to be fed into Problem \ref{prob:parametric_polytope_program}.
	\item For the optimal case, the demonstrations are obtained by solving trajectory optimization problems solved with the IPOPT solver \cite{ipopt}. For the suboptimal case, the demonstrations are recorded in a virtual reality (VR) environment displayed in Fig. \ref{fig:vive}.
	\item The initial parameter set is restricted to $[-1.5, -1.5, -1.5, -\pi, -\pi, -\pi]^\top \le [x, y, z, \alpha, \beta, \gamma]^\top \le [1.5, 1.5, 1.5, \pi, \pi, \pi]^\top$.
	\item Sampling time is 12.5 minutes total for the optimal case and 9 minutes total for the suboptimal case. Computation time for solving Problem \ref{prob:parametric_feasibility_program} is around 2 seconds for both the optimal/suboptimal case.
	\item The same data is used for training the neural network (70000 trajectories total for the optimal case, 49900 trajectories total for the suboptimal case). The neural network architecture used for this example is a fully connected (FC) layer, $3\times 20$ $\rightarrow$ LSTM, $20\times 20 \rightarrow$ FC $20\times 1$. The network is trained using Adam.
\end{itemize}

\begin{figure}
	\vspace{-15pt}
	\centering
	\begin{subfigure}[b]{0.45\textwidth}
         \centering
         \includegraphics[width=\linewidth]{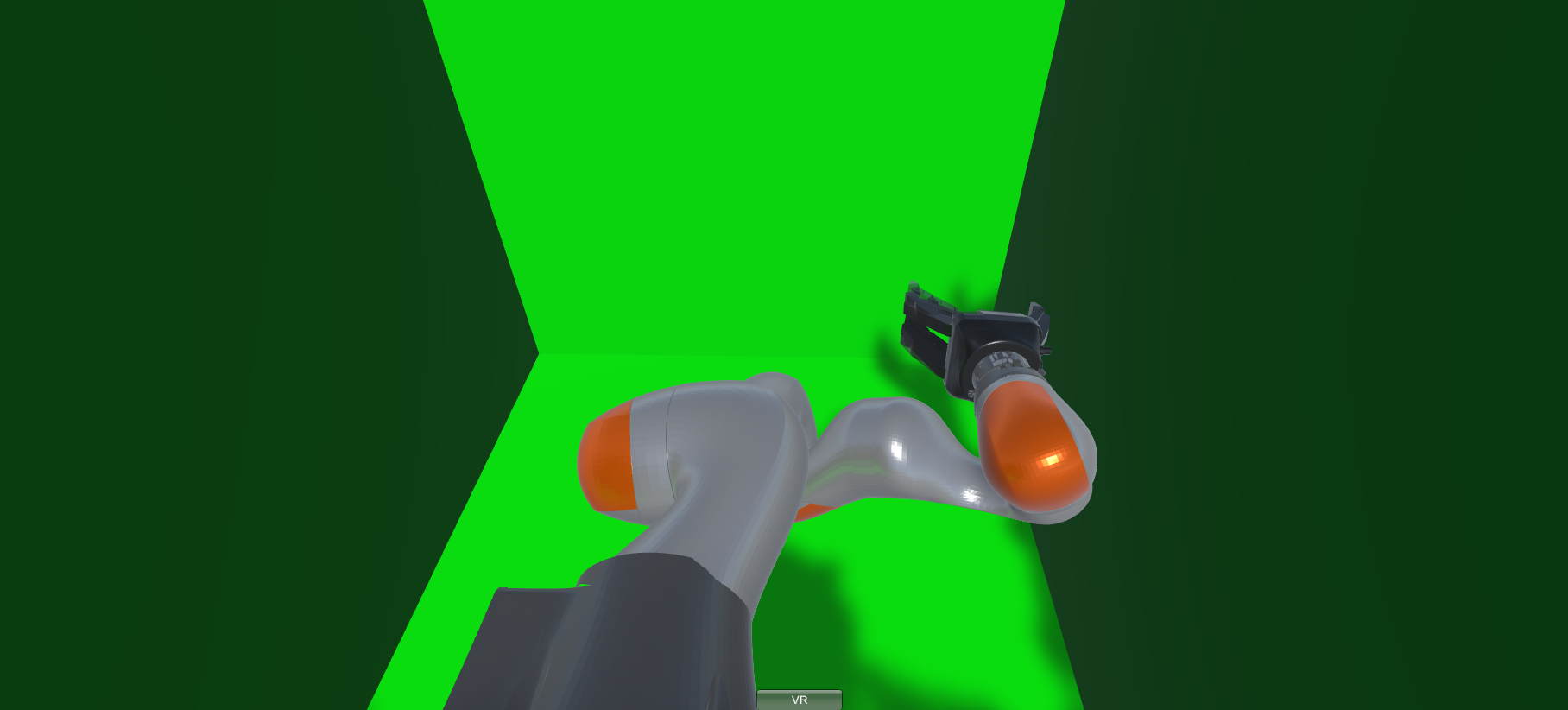}
         \vspace{-10pt}
     \end{subfigure}
     \begin{subfigure}[b]{0.3\textwidth}
         \centering
         \includegraphics[width=\linewidth]{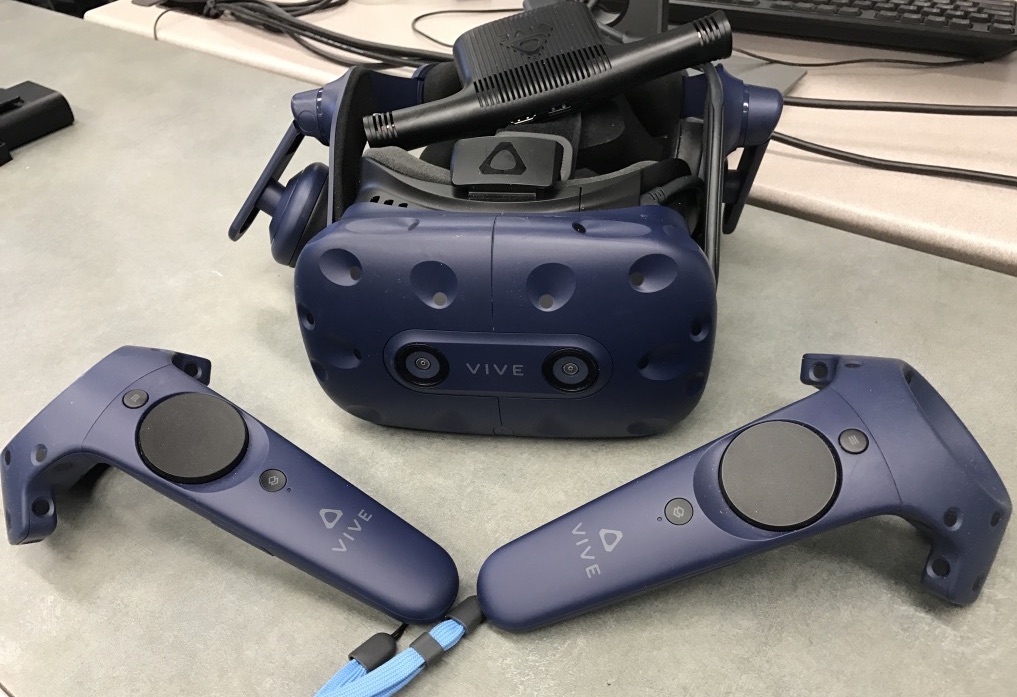}
     \end{subfigure}
     \caption{\small VR setup. \textbf{Left}: VR environment as viewed from the Vive headset. The green box represents the position constraints on the end effector. The end effector is commanded to move by dragging it with the HTC Vive controllers (\textbf{right}).}
     \vspace{-15pt}
     \label{fig:vive}
\end{figure}
\vspace{80pt}
\textbf{12D quadrotor example}

\begin{itemize}[leftmargin=*]
	\item The system dynamics \cite{quad_kth} are 
\begin{equation}
	\hspace{-5pt}\begin{bmatrix} \dot\chi \\ \dot y \\ \dot z \\ \dot\alpha \\ \dot\beta \\ \dot\gamma \\ \ddot \chi \\ \ddot y \\ \ddot z \\ \ddot \alpha \\ \ddot \beta \\ \ddot \gamma \end{bmatrix} = \begin{bmatrix} \dot\chi \\ \dot y \\ \dot z \\ \dot\beta \frac{\sin(\gamma)}{\cos(\beta)} + \dot\gamma \frac{\cos(\gamma)}{\cos(\beta)} \\ \beta \cos(\gamma) - \dot\gamma \sin(\gamma) \\ \dot\alpha + \dot\beta\sin(\gamma)\tan(\beta)+\dot\gamma\cos(\gamma)\tan(\beta) \\ -\frac{1}{m}[\sin(\gamma)\sin(\alpha) + \cos(\gamma)\cos(\alpha)\sin(\beta)]u_1 \\ -\frac{1}{m}[\cos(\alpha)\sin(\gamma) - \cos(\gamma)\sin(\alpha)\sin(\beta)]u_1 \\ g-\frac{1}{m}[\cos(\gamma)\cos(\beta)]u_1 \\ \frac{I_y-I_z}{I_x} \dot\beta \dot\gamma + \frac{1}{I_x}u_2 \\ \frac{I_z-I_x}{I_y} \dot\alpha \dot\gamma + \frac{1}{I_y}u_3\\ \frac{I_x-I_y}{I_z} \dot\alpha \dot\beta + \frac{1}{I_z}u_4\end{bmatrix},
\end{equation}
	with control constraints $[0, -0.02, -0.02, -0.02]^\top \le u_t \le [mg, 0.02, 0.02, 0.02]^\top$. For our purposes, we convert the dynamics to discrete time by performing forward Euler integration with discretization time $\delta t = 0.4$ seconds. The state is $x = [\chi, y, z, \alpha, \beta, \gamma, \dot x, \dot y, \dot z, \dot \alpha, \dot \beta, \dot \gamma]^\top$, and the constants are $g = -9.81 \textrm{m}/\textrm{s}^2$, $m=1$kg, $I_x = 0.5\textrm{kg}\cdot\textrm{m}^2$, $I_y = 0.1\textrm{kg}\cdot\textrm{m}^2$, and $I_z = 0.3\textrm{kg}\cdot\textrm{m}^2$.
	\item The known unsafe set in $(\chi,y,z)$ is $(\chi,y,z) \notin [-0.5, 0.5]\times [-0.5, 0.5] \times [-0.5, 0.5]$.
	\item The true safe set in $(\dot \alpha, \dot \beta, \dot\gamma)$ is $(\dot\alpha, \dot\beta, \dot\gamma) \in [-0.006, 0.006]^3$.
	\item The cost function is $c(\trajx, \traju) = \sum_{i=1}^{T-1} \Vert [\chi_{i+1}, y_{i+1}, z_{i+1}, \dot \alpha_{i+1}, \dot \beta_{i+1}, \dot \gamma_{i+1}]^\top - [\chi_{i}, y_{i}, z_{i}, \dot \alpha_{i}, \dot \beta_{i}, \dot \gamma_{i}]^\top \Vert_2$ (penalizing acceleration and path length).
	\item The demonstrations are obtained by solving trajectory optimization problems solved with the IPOPT solver \cite{ipopt}.
	\item Since the cost function has optimal substructure, 10000 unsafe trajectories for each sub-trajectory are sampled. The dataset is downsampled to 500 unsafe trajectories for each sub-trajectory, which are to be fed into Problem \ref{prob:parametric_polytope_program}.
	\item The initial parameter set is restricted to $[-\pi/2, -\pi/2, -\pi/2]^\top \le [\dot\alpha, \dot\beta, \dot\gamma]^\top \le [\pi/2, \pi/2, \pi/2]^\top$.
	\item Sampling time is 8.5 minutes total for the optimal case and 9 minutes total for the suboptimal case. Computation time for solving Problem \ref{prob:parametric_feasibility_program} is 12 seconds.
	\item The same data is used for training the neural network (30000 trajectories total). The neural network architecture used for this example is a fully connected (FC) layer, $6\times 36$ $\rightarrow$ LSTM, $36\times 42 \rightarrow$ FC $42\times 1$. The network is trained using Adam.
\end{itemize}

\subsection{Black-box system dynamics}

\textbf{Pushing example}

\begin{itemize}[leftmargin=*]
	\item The cost function is $c(\trajx, \traju) = \sum_{i=1}^{T-1} \Vert \state_{t+1} - \state_t\Vert_2^2$. The two demonstrations are manually generated and are not exactly optimal.
	\item 1000 unsafe trajectories for each demonstrations are sampled. 
	\item The initial parameter set is restricted to $[-5, -5, -3, -3]^\top \le \theta_i \le [8, 8, 3, 3]^\top$.
	\item Sampling time is 2 hours for each demonstration (using the simulator is slower than using the closed form dynamics). Computation time for solving Problem \ref{prob:parametric_feasibility_program} is around 1 second.
	\item Demonstrations are time-discretized to 40 simulator timesteps when input to Problem \ref{prob:parametric_polytope_program}.
	\item The same data is used for training the neural network (2700 trajectories total). The neural network architecture used for this example is a fully connected (FC) layer, $8\times 10$ $\rightarrow$ FC, $10\times 10 \rightarrow$ FC $10\times 1$. No recurrent layer is used this time since all trajectories are of the same length (no sub-trajectories were sampled this time due to speed). The network is trained using Adam.
\end{itemize}

\vspace{10pt}
\section{Summary of frequently used notation}\label{sec:app_notation}

\begin{figure}[H]
\small
\begin{tabular}{ | c || c | } 
\hline
Meaning & Notation\\ 
\hline
\hline
State, state space & $\state$, $\statespace$ \\ 
\hline
Control, control space & $\control$, $\controlset$\\
\hline
State/control trajectory & $\trajx$, $\traju$\\
\hline
Constraint state, constraint space & $\cstate$, $\constraintspace$\\
\hline
Safe set, unsafe set & $\safeset$, $\unsafeset$ \\
\hline
Parameterized safe set & $\safeset(\theta) = \{\cstate\mid g(\cstate, \theta) > 0\}$\\
\hline
Parameterized unsafe set & $\unsafeset(\theta) = \{\cstate\mid g(\cstate, \theta) \le 0\}$\\
\hline
Safe demonstration $j$ & $\traj_{s_j}^*$\\
\hline
Sampled unsafe trajectory $k$ & $\traj_{\neg s_k}$\\
\hline
Guaranteed safe set & $\guarsafe$\\
\hline
Guaranteed unsafe set & $\guarunsafe$\\
\hline
\end{tabular}
\centering
\captionof{table}{Notation.}
\end{figure}

\end{document}